\lstdefinestyle{python}{
    language=Python,
    basicstyle=\ttfamily\footnotesize,
    keywordstyle=\color{blue}\bfseries,
    stringstyle=\color{red},
    commentstyle=\color{green!50!black},
    numberstyle=\tiny,
    stepnumber=1,
    numbersep=5pt,
    frame=lines,
    breaklines=true,
    showstringspaces=false,
    captionpos=b,
    tabsize=4,
    escapeinside={(*@}{@*)}
}
\def\imgtwo{\scalerel*{\includegraphics{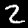}}{X}\hspace{0.5em}} 
\def\imgthree{\scalerel*{\includegraphics{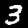}}{X}\hspace{0.5em}} 
\def\imgfive{\scalerel*{\includegraphics{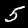}}{X}\hspace{0.5em}} 
\def\imgeight{\scalerel*{\includegraphics{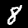}}{X}\hspace{0.5em}}
\newtheorem{example}{Example}
\newtheorem{theorem}{Theorem}
\newcommand{\tribe}[0]{\mathcal{A}}
\newcommand{\prob}[0]{\mathbb{P}}
\newcommand{\inputspace}[0]{\mathcal{I}}
\newcommand{\outputspace}[0]{\mathcal{O}}
\newcommand{\function}[0]{F}
\newcommand{\distrib}[0]{p}
\newcommand{\nn}[0]{\text{NN}}
\newcommand{\Functions}[2]{\mathcal{F}(#1,#2)}
\newcommand{\Vset}[1]{V_{#1}}
\newcommand{\Xvar}[1]{X_{#1}}
\newcommand{\Sfunction}[0]{S}
\newcommand{\unknown}[0]{\mathbf{unknown}}
\newcommand{\setofvaluations}{\mathcal{V}}
\title{On Scaling Neurosymbolic Programming through Guided Logical Inference}
\author{
Thomas Jean-Michel Valentin$^{1,2}$ \and
Luisa Sophie Werner$^2$ \and
Pierre Gen\`eves$^2$ \and 
Nabil Laya\"ida$^2$ \\\
\affiliations
$^1$ENS Paris-Saclay, 4 av des Sciences, 91190 Gif-sur-Yvette, France, \url{https://ens-paris-saclay.fr/}\\
$^2$TYREX Univ. Grenoble Alpes, CNRS, Inria,
Grenoble INP, LIG,  655 av Europe, 38330 Montbonnot Saint Martin, France, \url{https://www.inria.fr/fr/tyrex}\\
\emails
thomas.valentin@ens-paris-saclay.fr,
\{Luisa.Werner, Pierre.Geneves\}@inria.fr
}
\begin{document}

\maketitle

\begin{abstract}
Probabilistic neurosymbolic learning seeks to integrate neural networks with symbolic programming. 
Many state-of-the-art systems rely on a reduction to the Probabilistic Weighted Model Counting Problem (PWMC), which requires computing a Boolean formula called the logical provenance.
However, PWMC is \#P-hard, and the number of clauses in the logical provenance formula can grow exponentially, creating a major bottleneck that significantly limits the applicability of PNL solutions in practice.
We propose a new approach centered around an exact algorithm DPNL, that enables bypassing the computation of the logical provenance.
The DPNL approach relies on the principles of an oracle and a recursive DPLL-like decomposition in order to guide and speed up logical inference.
Furthermore, we show that this approach can be adapted for approximate reasoning with $\epsilon$ or $(\epsilon, \delta)$ guarantees, called ApproxDPNL.
Experiments show significant performance gains.
DPNL enables scaling exact inference further, resulting in more accurate models.
Further, ApproxDPNL shows potential for advancing the scalability of neurosymbolic programming by incorporating approximations even further, while simultaneously ensuring guarantees for the reasoning process.
\end{abstract}

\section{Introduction}
Neurosymbolic artificial intelligence \cite{NeSy:Survey:vanHarmelen2019,NeSy:Survey:Varmeulen2023} seeks to integrate neural perception with symbolic methods to produce more accurate, efficient, and reliable models. 

Probabilistic Neurosymbolic Learning is a research direction that combines neural networks with probabilistic reasoning \cite{NeSy:DeepProbLog:Manhaeve2018,NeSy:Survey:DeRaedt2020,NeSy:Scallop:Siyang2023}. 
Neural networks are employed to extract probability distributions from raw data.
They are end-to-end integrated with the reasoning process of a logic program to derive solutions.
In contrast to pure neural black-box models, the passage through a logic program provides a clear lineage of how a result is derived, enhancing interpretability. 
 
However, neurosymbolic models face significant scalability challenges due to the complexity of the inference in the logic program.  
Prominent systems for exact inference such as DeepProbLog \cite{NeSy:DeepProbLog:Manhaeve2018} rely on a reduction to the Probabilistic Weighted Model Counting problem (PWMC). 
This requires computing a logical provenance formula, whose size can grow exponentially depending on the task. 
Furthermore, solving the PWMC is \#P-hard. 
This significantly limits the applicability of PNL approaches in practice, as they struggle to handle large problem instances effectively.

Approximations have been developed to overcome these scalability issues \cite{NeSy:DeepProbLog:Manhaeve2021,NeSy:DeepStochLog:Winters2022,NeSy:A-NeSI:vanKrieken2023,NeSy:Scallop:Huang2021}. 
However, these approximations do not provide guarantees, which potentially undermines one of the primary benefit of PNL: increased confidence in the reliability of the model.

\paragraph{Contribution.}
We introduce a new approach for probabilistic neurosymbolic learning based on a new algorithm, DPNL, for exact probabilistic reasoning.
One of its main advantages is that it bypasses the explicit computation of the logical provenance formula.
Instead, DPNL relies on an oracle to guide and accelerate the logical inference process.
This is achieved by recursively decomposing probabilistic computations in a DPLL-inspired manner.
We explain how to construct valid oracles and formally prove the termination and correctness of DPNL.
Additionally, it can be adapted for approximate computations with $\epsilon$ or $(\epsilon, \delta)$ guarantees.
Experiments demonstrate that DPNL scales better in practice. 
This makes it possible to build accurate models for tasks that previous PNL methods with exact inference cannot handle.
Furthermore, the experiments show that ApproxDPNL is advantageous in scaling DPNL to even larger tasks while providing reliability in the form of guarantees for the approximation.

\section{Problem statement and definitions\label{Section:FormalDefinition}}
\paragraph{Notations.}
We first introduce some notations.
A function $f$ that maps elements from a set $A$ to a set $B$ is denoted as $f : A \rightarrow B$.
The set of all such functions is written as $\Functions{A}{B}$.
For any function $f : A \rightarrow B$ and any $y \in B$, the preimage of $y$, i.e., the set of all $x \in A$ such that $f(x)=y$, is denoted by $f^{-1}(y)$. An estimator of a function $f$ is denoted by $\hat{f}$.

We consider the Probabilistic Learning (PL) problem and focus on solving it using Probabilistic Neurosymbolic Learning (PNL) systems, which integrate both neural and symbolic components. The formal definitions follow.

\paragraph{Probabilistic Learning Problem.}
A PL problem is a quadruplet $P = ((\Omega, \tribe, \prob), \inputspace, \outputspace, \function)$ where:
\begin{itemize}
    \item $(\Omega, \tribe, \prob)$ is a probability space with sample space $\Omega$, event space $\tribe$ and probability function $\prob$. 
    \item $\inputspace$ is the input space.
    \item $\outputspace$ is the finite output set.
    \item $\function : \inputspace \rightarrow \Functions{\Omega}{\outputspace}$ is a function that maps each input $i \in \inputspace$ to its output random variable $\function(i)$.
\end{itemize}
The probability distribution of $P$ is the function $p$ that maps every input $i$ to the probability distribution of $F(i)$: \begin{center}
    $\forall i \in \inputspace$, $\distrib(i) : o \in \outputspace \mapsto \prob(F(i)=o)$.
\end{center}
We define solving $P$ as estimating $\distrib$.

\begin{example}[MNIST Classification]
\label{ex:mnist_classification}
Consider the MNIST classification task. 
The goal is to recognise handwritten digits in images, e.g. the digit in \imgeight is 8. 
This task can be modelled as the MNIST PL problem 
$((\Omega, \tribe, \prob), \inputspace_{\text{MNIST}}, \outputspace_{\text{MNIST}}, \function_{\text{MNIST}})$, where:
\begin{itemize}
    \item $\inputspace_{\text{MNIST}} = \mathbb{R}^{28\times 28}$ represents the space of all possible images,
    \item $\outputspace_{\text{MNIST}} = [0,9]$ is the set of possible output labels,
    \item $\function_{\text{MNIST}}$ associates with each image $i$ the random variable $\function_{\text{MNIST}}(i)$ for the digit in $i$.
\end{itemize}
Solving the MNIST PL problem amounts to estimating the function $\distrib_{\text{MNIST}}$ that maps each image $i$ to the probability distribution of $\function_{\text{MNIST}}(i)$, which can be done using a neural network $\hat{\distrib}_{\text{MNIST}}$.
\end{example}

\paragraph{Probabilistic Neurosymbolic Learning System.}
Let $P = ((\Omega, \tribe, \prob), \inputspace, \outputspace, \function)$ be a PL problem and $\distrib$ its corresponding probability distribution.
A PNL system for $P$ is a quadruplet $((\Vset{k})_{1\leq k \leq m}, (\Xvar{k})_{1 \leq k \leq m}, (\hat{\distrib_k})_{1\leq k \leq m}, \Sfunction)$ where:
\begin{itemize}
    \item $m \geq 1$ is the order of the PNL system.
    \item $\forall k \in [1,m]$, $P_k = ((\Omega, \tribe, \prob), \inputspace, \Vset{k}, \Xvar{k})$ is a PL problem with its probability distribution $\distrib_k$.
    \item $\forall i \in \inputspace$, the random variables $(\Xvar{k}(i))_{1\leq k \leq m}$ are independent. We say that the sub-problems $(P_k)_{1\leq k \leq m}$ are independent. 
    \item $\forall k \in [1,m]$, $\hat{\distrib_k}$ is a neural network which solves $P_k$, i.e, it estimates $\distrib_k$. 
    The set of parameters of $\hat{\distrib_k}$ is $\theta_k$.
   
    \item $S : \Vset{1} \times ... \times \Vset{m} \rightarrow \outputspace$ is a symbolic function that maps values obtained from the sub-problems to the output.
    $S$ is known.
    
    \item $\function = \Sfunction \circ (\Xvar{1},...,\Xvar{m})$, that is, for all $i \in \inputspace$ and $w\in \Omega$, $F(i)(w) = S(\Xvar{1}(i)(w),...,\Xvar{m}(i)(w))$.
\end{itemize}
The PNL system decomposes $P$ into $m$ smaller, independent sub-problems $(P_k)_{1 \leq k \leq m}$. Each sub-problem $P_k$ is solved by estimating $\distrib_k$ with $\hat{\distrib_k}$.
The outputs of $\hat{\distrib_k}$ are then combined using the symbolic function $\Sfunction$ to estimate $\distrib$ thanks to the following relation, which holds because of the independence assumption of the random variables:
\begin{align}
    p(i)(o) & = \sum_{(x_1,...,x_m)\in \Sfunction^{-1}(o)} \prob(\bigcap_{k=1}^m \Xvar{k}(i) = x_k) \label{eq:theory:core0}\\
            & = \sum_{(x_1,...,x_m)\in \Sfunction^{-1}(o)} \prod_{k=1}^m \distrib_k(i)(x_k). \label{eq:theory:core1}
\end{align}
The corresponding estimator $\hat{p}$ is defined by:
\begin{align}\label{eq:core}
    \hat{p}(i)(o) & = \sum_{(x_1,...,x_m)\in \Sfunction^{-1}(o)} \prod_{k=1}^m \hat{\distrib_k}(i)(x_k).
\end{align}
Moreover, $\hat{\distrib}(i)(o)$ is differentiable w.r.t. to $\theta=(\theta_k)_{1\leq k \leq m}$, which allows to train the $(\hat{\distrib_k})_{1\leq k \leq m}$ directly with gradient descent and a data set for the problem $P$.

\paragraph{Major Challenge.}
The primary computational challenge lies in efficiently evaluating the sum over $\Sfunction^{-1}(o)$ in equation (\ref{eq:core}).
Indeed, $|S^{-1}(o)|$ can grow exponentially w.r.t. $m$, and inverting a symbolic function is in essence a NP-hard task. 
Therefore, addressing the complexity of computing $S^{-1}(o)$ is a key focus in improving PNL system scalability.

\begin{example}[MNIST-SUM Task]
    \label{ex:mnist_sum}
We consider the neurosymbolic MNIST-SUM task \cite{NeSy:DeepProbLog:Manhaeve2018}, which will be used as a running example.
The goal is to learn to predict the sum of handwritten digits, given only the label of the sum, but not the labels of the individual digits in the images, e.g. \imgthree $+$ \imgfive =  8. 
This involves two subtasks: (1) perception of the digit in the image and (2) reasoning on the combination of the digits (with the sum as the symbolic function $S$). 
Building on the MNIST PL problem, see Example~\ref{ex:mnist_classification}, we define the MNIST-SUM PL problem as $((\Omega, \tribe, \prob), \inputspace, \outputspace, \function)$, where
\begin{itemize}
    \item $\inputspace = \inputspace_{\text{MNIST}} \times \inputspace_{\text{MNIST}}$ are pairs of images,
    \item $\outputspace = [0,18]$ are the output values of the sum,
    \item $F(i_1,i_2)= F_{\text{MNIST}}(i_1) + F_{\text{MNIST}}(i_2)$ is the random variable related to the sum of the digits in the two images $i_1, i_2$.
\end{itemize}

For this task, we also define the PNL system $((\Vset{1}, \Vset{2}), (\Xvar{1}, \Xvar{2}),(\hat{\distrib_{1}}, \hat{\distrib_{2}}), S)$, where
\begin{itemize}
    \item $\Vset{1} = \Vset{2} = [0,9]$,
    \item $\Xvar{1}(i_1,i_2) = F_{\text{MNIST}}(i_1)$ and $\Xvar{2}(i_1,i_2) = F_{\text{MNIST}}(i_2)$ for all pairs of images $(i_1, i_2) \in \inputspace_{\text{MNIST}} \times \inputspace_{\text{MNIST}}$,

    \item $\hat{\distrib}_1(i_1,i_2) = \hat{\distrib}_{\text{MNIST}}(i_1)$ and $\hat{\distrib}_2(i_1,i_2) = \hat{\distrib}_{\text{MNIST}}(i_2)$ for all $(i_1,i_2) \in \inputspace^2$,
    \item $S(d_1,d_2) = d_1+d_2$ for all digit values $d_1 \in V_1, d_2 \in V_2$. 
\end{itemize}
Thus, this PNL system for MNIST-SUM consists of two independent MNIST PL problems for classifying the digits in the images. 

This task can easily be extended to MNIST-N-SUM, where N is the number of digits per summand. 
In this notation, the previous example is a MNIST-1-SUM. 
For N=2, two digits form the summands, e.g. \imgtwo \imgeight $+$ \imgthree \imgfive $=$ 63.
By extending from one-digit sums (N=1) to two-digits sums (N=2), the complexity of the task increases exponentially.
Thus, the multi-digit MNIST-N-SUM task is well-suited for exploring the ability  of PNL systems to scale.
\end{example} 

\begin{example}[Inverting $S$ for MNIST-N-SUM]
    The challenge of PNL is clearly to compute equation~\ref{eq:core} efficiently.
    The order of the PNL system for MNIST-N-SUM is $m = 2N$, which corresponds to the total number of digits to recognize ($N$ for the first summand and $N$ for the second).
    To compute $S^{-1}(o)$ for a given sum $o$, one could enumerate all possible combinations of digits $(d_1, \dots, d_{2N})$ and check if $S(d_1, \dots, d_{2N}) = o$.
    For example, given the MNIST-1-SUM task and the output sum value 4, all combinations of digits that sum to 4 are considered: $(0,4), (3,1) \hdots (4,0)$.
    This naive approach would require at least $10^{2N}$ operations.
\end{example}

A common approach, popularized by the seminal work on Problog~\cite{NeSy:ProbLog:DeRaedt2007}, is to consider that $S$ is expressed in a probabilistic Prolog language (see Appendix~\ref{appendix:applicationtodeepproblog} for details).
The probability of a query is then computed by reducing it to the probabilistic weighted model counting problem, a specific case of the weighted model counting problem whose formal definition is provided below.

\paragraph{Probabilistic Weighted Model Counting (PWMC).}
\label{definition:PWMC}
Consider:
\begin{itemize}
    \item $\mathbf{X}$ is a set of (independent) binary (random) variables.
    \item $G : \Functions{\mathbf{X}}{\{0,1\}} \rightarrow \{0, 1\}$ is a Boolean formula.
    \item $\sigma : \mathbf{X} \rightarrow [0,1]$ is a probability distribution.
\end{itemize}
For any valuation $\psi \in \Functions{\mathbf{X}}{\{0,1\}}$, we define the weight of $\psi$ w.r.t. $\sigma$ by
\begin{align*}
    w_\psi^\sigma := & \prod_{X \in \psi^{-1}(1)}\sigma(X)\cdot\prod_{X \in \psi^{-1}(0)}(1-\sigma(X)).
\end{align*}
A valuation $\psi \in \Functions{\mathbf{X}}{\{0,1\}}$ such that $G(\psi) = 1$ is called a model of $G$ and $G^{-1}(1)$ denotes the set of models of $G$.
The probabilistic weighted model count of $G$ under $\sigma$ is defined as
\begin{align}
    \text{PWMC}_G^\sigma := & \sum_{\psi \in G^{-1}(1)}w_\psi^\sigma.
\end{align}

The reduction to PWMC involves representing the symbolic function $S$ as a Boolean logical formula $G$, known as the logical provenance. 
Many PNL systems based on this reduction suffer from performance bottlenecks, in particular when explicitly processing the full logical provenance.

In the following section, we introduce a novel approach designed to enable the  construction of a new class of more efficient PNL systems.
This approach aims at avoiding key bottlenecks associated with the intermediate representation $G$.
It directly addresses the challenge of efficiently evaluating the sum over $\Sfunction^{-1}(o)$ in equation (\ref{eq:core}), using a novel method inspired by DPLL.

\section{A novel architecture for PNL systems}
We introduce the new approach in two steps: first, we present ProbDPLL as an intermediate algorithm that serves as the basis for key observations. Then, building on these observations, we extend it to a more general setting and introduce the DPNL algorithm.

\subsection{ProbDPLL}
We first introduce a new algorithm: ProbDPLL for computing $\text{PWMC}_G^\sigma$ when $G$ is a Boolean formula in CNF form.
ProbDPLL builds on the observation that model counting is analogous to PWMC. 
It extends the \#DPLL model counter introduced in \cite{WMC:DPLL:Bacchus2003}, which is based on the seminal DPLL algorithm \cite{WMC:DPLL:Davis1962}, adapting it to compute $\text{PWMC}_G^\sigma$. 

\paragraph{The ProbDPLL Algorithm.}
The pseudocode of ProbDPLL is given in Algorithm~\ref{alg:ProbDPLL}.
It uses the notation $G_{|X=b}$ to denote the CNF formula obtained by replacing $X$ with $b$ in $G$ and simplifying the formula.
It explores partial valuations of the formula $G$ and stops exploration when the partial valuation evaluates $G$ as true (when no clauses remain) or false (when the empty clause is reached).
A vizualization of an example run of ProbDPLL is given in Appendix~\ref{appendix:probdpllviz}.

\paragraph{Correctness and termination.}
For all $G$ provided in CNF and a probability distribution $\sigma$ about the variables in $G$, $\text{ProbDPLL}(G, \sigma)$ always terminates and returns $\text{PWMC}_G^\sigma$.
A detailed proof of the termination and correctness of the algorithm is provided in Appendix~\ref{appendix:proof:ProbDPLLCorrectness}.

\paragraph{Application to logic programs.}
We can apply this algorithm to probabilistic logic programs such as in DeepProbLog \cite{NeSy:DeepProbLog:Manhaeve2018} (c.f. Section~\ref{section:ObtainingOracleFromLogic}). 
Since the logical provenance in DeepProbLog is expressed in DNF, we instead apply the algorithm to its negation, which is in CNF. 
This yields a probability $p$, allowing us to return $1-p$ as the probability of the logical provenance.
 
\begin{algorithm}[tb]
    \caption{ProbDPLL($G, \sigma$)}
    \label{alg:ProbDPLL}
    \textbf{Input}: $G$ a Boolean formula represented in CNF, $\sigma$ a probability distribution of the variables in $\mathbf{X}$\\
    \textbf{Output}: $\text{PWMC}_G^\sigma$
    \begin{algorithmic}[1] 
        \IF {$G$ has no clauses}
            \STATE \textbf{return} $1$
        \ELSIF {$G$ contains the empty clause}
            \STATE \textbf{return} $0$
        \ELSE
            \STATE Choose $X\in \mathbf{X}$ such that $X$ appears in $G$
            \STATE $p_{G_{|X=1}} \gets \text{ProbDPLL}(G_{|X=1},\sigma)$
            \STATE $p_{G_{|X=0}} \gets \text{ProbDPLL}(G_{|X=0},\sigma)$
            \STATE \textbf{return} $\sigma(X)\cdot p_{G_{|X=1}} + (1-\sigma(X))\cdot p_{G_{|X=0}}$
        \ENDIF
    \end{algorithmic}
\end{algorithm}

\paragraph{Key observations.}
While improvements to ProbDPLL, such as the component caching introduced for \#DPLL by \cite{WMC:DPLL:Bacchus2003}, are possible, they do not directly address the issue of logical provenance size.
To tackle this problem, we highlight the following key observations:
\begin{itemize}

    \item ProbDPLL is designed for Boolean formulas but can be adapted to general (non-Boolean) functions, such as the symbolic function $S$ defined in Section~\ref{Section:FormalDefinition}, by working with independent random variables with finite domains (instead of independent binary random variables, as in PWMC).
    For instance, in the MNIST-SUM example~\ref{ex:mnist_sum}, the input values of $S$ are the digit values $[0,9]$.
    Therefore, it is more convenient to use random variables with values in the range $[0,9]$ than to encode them with independent binary random variables (which would result in an exponential blowup, as done in \cite{NeSy:DeepProbLog:Manhaeve2018} and detailed in Appendix~\ref{appendix:nad}). 
    ProbDPLL does not require $G$ to be in CNF.
    As long as $G$ is represented in a form that allows the algorithm to partially evaluate it with respect to a variable $X$ and determine whether this partial evaluation makes $G$ true or false, the algorithm will function correctly. 
    Furthermore, $G$ does not even need to be explicitly known.
    ProbDPLL can operate as long as it has access to the answers to the question: ``Does the partial valuation evaluate $G$ to true or false?''
    
\end{itemize}

\subsection{The DPNL Algorithm}
We now introduce the main contribution: the DPNL algorithm, which generalizes ProbDPLL to support more general symbolic functions $S$ (not necessarily Boolean formulas like $G$) and independent random variables with finite domains (instead of only independent binary random variables).

Instead of working directly with a specific representation of $S$, as ProbDPLL does with $G$, DPNL follows a different approach.
It calls an oracle function that extracts only the necessary information about $S$ to guide the inference process in a way similar to ProbDPLL.
The concept of an oracle here is an abstraction that captures precisely the properties required for DPNL to be correct.
The advantage of this method is that, in many cases, the oracle can be thoughtfully implemented, making DPNL more efficient than other PNL frameworks, such as DeepProbLog.

We now formally define the concepts of valuations and oracles used in DPNL.
 
\paragraph{Valuation.} 
A valuation $v$ of the variables $(X_k)_{1\leq k \leq m}$ is an array of size $m$ such that for all $k\in [1,m]$, $v[k] \in V_k \cup \{ \unknown \}$.
The following notations are used: \begin{itemize}
    \item $\setofvaluations{}$ denotes the set of all valuations.
    \item A \emph{total valuation} is one where : $\forall k \in [1,m]$, $v[k] \neq \unknown$.
    \item A \emph{partial valuation} is one that is not \emph{total}.
    \item $v'$ is a \emph{sub-valuation} of $v$ if: \begin{center}
        $\forall k\in[1,m]$, $v[k]\neq \unknown \implies v'[k]=v[k]$
    \end{center}
    \item $sub(v)$ is the set of sub-valuations of $v$.
    \item $tot(v)$ is the set of total sub-valuations of $v$.
    \item $w(i,v)$ is the event corresponding to $v$ with input $i$. Formally, it is the set of $w\in \Omega$ such that $[X_1(i)(w),...,X_m(i)(w)] \in tot(v)$.
    \item $v_{v[k]\leftarrow x}$ is the valuation $v$ where the value of $v[k]$ is replaced by $x$.
\end{itemize}

\paragraph{Oracle.} 
\label{section:oracle}
Given the symbolic function $S$  of a PNL system, an oracle $O_S$ for $S$ is a function that, given a valuation $v\in \setofvaluations{}$ and an output $o \in \outputspace$, answers the question
``Is $S$ always equal to or always different from $o$ for all sub-valuations of $v$?''.
Intuitively, from a given output $o$, the oracle is in charge of determining whether it is still relevant to explore sub-valuations or whether the exploration process can be stopped.

Formally, a function $O_S : \setofvaluations{} \times \outputspace \rightarrow \{0,1, \unknown\}$ is a valid oracle for $S$ if and only if, for all $v \in \setofvaluations{}$ and $o \in \outputspace$:
\begin{itemize}
    \item If the valuation $v$ is total,
    \begin{center}
        $O_S(v,o) = \begin{cases}
        1 \text{ if } S(v[1],...,v[m]) = o, \\
        0 \text{ otherwise }
    \end{cases}$
    \end{center}
    \item If $O_S(v,o) = 1$,
    \begin{center}
        $\forall v' \in tot(v)$, $S(v'[1], ..., v'[m]) = o$
    \end{center}
    \item If $O_S(v,o) = 0$,
    \begin{center}
        $\forall v' \in tot(v)$, $S(v'[1], ..., v'[m]) \neq o$
    \end{center}
\end{itemize}
We say that the oracle is complete if the last two points are equivalences.

\paragraph{DPNL.}
The pseudocode of DPNL is given in Algorithm~\ref{alg:DPNL}.
We can compute $\distrib(i)(o)$ by initially calling $\text{DPNL}(i,o,[\unknown,...,\unknown],O_S)$.
In practice, however, we want to compute $\hat{\distrib}(i)(o)$ because we do not know $\prob$.
Therefore, we replace $\prob(\Xvar{k}(i)=y)$ at line~\ref{alg:DPNL:useprob} by its neural estimation $\hat{\distrib_k}(i)(y)$.

\paragraph{Termination and Correctness.}
For every \begin{itemize}
    \item PL problem $\bigl((\Omega, \tribe, \prob), \inputspace, \outputspace, F\bigr)$,
    \item PNL system $\bigl((\Vset{k})_{1\leq k \leq m}, (\Xvar{k})_{1\leq k \leq m}, (\hat{\distrib_{k}})_{1\leq k \leq m}, S\bigr)$ for the PL problem,
    \item valid oracle $O_S$ for $S$,
    \item $i \in \inputspace$, $o \in \outputspace$ and $v \in \setofvaluations{}$,
\end{itemize}  
$\text{DPNL}(i,o,v,O_S)$ always terminates and returns $\prob(F(i) = o | w(i,v))$.
A detailed proof of the termination and correctness of the algorithm is provided in Appendix~\ref{appendix:proof:DPNLCorrectness}.

\begin{algorithm}[tb]
    \caption{DPNL($i,o,v,O_S$)}
    \label{alg:DPNL}
    \textbf{Input}: $i \in \inputspace$, $o \in \outputspace$, $v \in \setofvaluations{}$, $O_S$ an oracle for $S$ \\
    \textbf{Output}: $\prob(F(i) = o | w(i,v))$
    \begin{algorithmic}[1] 
        \IF {$O_S(v,o) = 1$}
            \STATE \textbf{return} $1$
        \ELSIF {$O_S(v,o) = 0$}
            \STATE \textbf{return} $0$
        \ELSE
            \STATE Choose $k \in [1,m]$ such that $v[k] = \unknown$. \label{alg:DPNL:choosek}
            \STATE \textbf{return} $\underset{y \in V_k}{\Sigma}\prob(X_k(i) = y)\label{alg:DPNL:useprob}\cdot\text{DPNL}(i,o,v_{v[k]\leftarrow y},O_S)$ 
        \ENDIF
    \end{algorithmic}
\end{algorithm}

\subsubsection{Advantages of using Oracles}

For any symbolic function $S$, if we dispose of a procedure computing $S$, then there exists a valid oracle for $S$.
Indeed, the oracle that returns $\unknown$ when the valuation is partial and tests if the result of $S$ is equal to the output when the valuation is total, is valid for $S$.
However, with this naive oracle, the complexity of DPNL is the same as testing $S$ over all possible inputs to compute $S^{-1}(o)$.
It is thus important to design oracles that return $0$ or $1$ for partial valuations in order to prune the search space of DPNL.
Again, a naive oracle could test all total sub-valuations of $v$ to answer $0$ or $1$ as soon as possible, but this is not efficient.
Thus, a good oracle should strike a balance between pruning capability and efficiency.

In practice, $S$ could naturally be represented by an algorithm.
It is often possible to derive an efficient valid oracle $O_S$ with pruning capability from this algorithm.
For example, in the MNIST-N-SUM task, $S$ is the sum algorithm which operates digit by digit, computing the digits of the result from right to left (see Algorithm~\ref{alg:Addition} in Appendix~\ref{appendix:MNISTAddOracle}).
Since this algorithm constructs the result of $S$ digit by digit, we can take advantage of it to design an efficient oracle which is able to prune the search space (see Algorithm~\ref{alg:AdditionOracle} in Appendix~\ref{appendix:MNISTAddOracle}).

This is a major advantage of the DPNL approach: it can readily work with naive oracles, while providing the flexibility to define custom oracles, allowing as much refinement and optimization as desired, depending on the effort invested in their design.

In the next section, we explain how valid oracles can be systematically derived from more general programs.

\section{Obtaining Oracles from Logic Programs}\label{section:ObtainingOracleFromLogic}

We now explain how valid oracles can be systematically derived from a broad class of logic programs. 
To this end, we first recall the connection between logic and probabilistic learning (PL) problems. Next, we define a class of neural probabilistic logic programs, building on ProbLog \cite{NeSy:ProbLog:DeRaedt2007} and DeepProbLog.
Finally, we provide an algorithm to compute oracles from these programs, and prove that the obtained oracles are valid.
The formal definitions follow.

\newcommand{\Logic}[0]{\mathcal{L}}
\newcommand{\Func}[0]{\mathcal{F}}
\newcommand{\Pred}[0]{\mathcal{P}}

\paragraph{Logics.}
A logic $\Logic$ consists of a set of functions $\Func$ and predicates $\Pred$, logical operators such as $\land$ and $\exists$, their semantics, and an infinite set of variables $\mathcal{X}$.
Combinations of functions and variables form terms, predicates applied to terms form atoms and logical combinations of atoms form logical formulas.
The purpose of a logic is to model aspects of reality, which is formalized through an interpretation $I = (\Delta, [.]_\Func, [.]_\Pred)$, where:
\begin{itemize}
    \item $\Delta$ is a set of real objects called the domain,
    \item for all $f\in \Func$, $[f]_\Func \in \Functions{\Delta^{arity(f)}}{\Delta}$ is the interpretation of $f$,
    \item for all $p\in \Pred$, $[p]_\Pred \in \Functions{\Delta^{arity(p)}}{\{0,1\}}$ is the interpretation of $p$.
\end{itemize}
This defines a semantics $[.]_I$ by recursively applying the semantics of logical operators and the interpretations of functions and predicates.

\begin{example}[Logic for MNIST]
    In MNIST, we can consider $\Logic$ where $\Func = \{i_1(0), i_2(0), \underline{0}(0), ..., \underline{9}(0)\}$ and $\Pred = \{is(2) \}$ where $s(k)$ denotes a symbol $s$ of arity $k$.
    We can define $I = (\text{Images} \cup [0,9], [.]_\Func, [.]_\Pred)$ where $[i_1]_\Func, [i_2]_\Func$ are images, $[\underline{d}]_\Func = d$ and where, for example, $[is(i_1,\underline{7})]_I = [is]_\Pred([i_1]_\Func, [\underline{7}]_\Func) = 1$ iff $[i_1]_\Func$ is an image of $7$.
\end{example}

In the MNIST classification example, the task consists in estimating the probability of $is(i_1, \underline{d})$ for every digit $d$.
Therefore, we can model this probabilistic aspect in the logic by considering the predicate interpretation as probabilistic.
We can then define logic PL problems by following this idea.

\paragraph{Logic PL problem.}
Consider a logic $\Logic$, a domain $\Delta$ and a closed (i.e without variables) logical formula $\phi$ over $\Logic$.
We define:
\begin{itemize}
    \item $\Gamma_{\Logic, \Delta} = (\Omega_{\Logic, \Delta}, \tribe_{\Logic, \Delta}, \prob_{\Logic, \Delta})$ is the probabilistic space of $\Logic$ in the real context $\Delta$.
    It models the probabilistic aspect of facts by taking $\Omega_{\Logic, \Delta}$ as the set of all predicate interpretations $[.]_\Pred$ for $\Delta$ that are possible.
    $\prob_{\Logic, \Delta}$ is specific to the real context and models the likelihood of each $[.]_\Pred$.
    \item $\inputspace_{\Logic, \Delta}$ is a subset of all possible function interpretations $[.]_\Func$ for $\Delta$.
    It models the possible observations about the reality and depends on the real context.
    \item $B_{\Logic, \Delta}^\phi$ is the function that maps interpretations of $\inputspace_{\Logic, \Delta}$ to binary random variables in $\Gamma_{\Logic, \Delta}$ corresponding to $\phi$:
    \begin{center}
        $B_{\Logic, \Delta}^\phi([.]_\Func)([.]_\Pred) = [\phi]_{(\Delta,[.]_\Func, [.]_\Pred)}$.
    \end{center}
    \item $P_{\Logic, \Delta}^\phi = (\Gamma_{\Logic, \Delta}, \inputspace_{\Logic,\Delta}, \{0,1\}, B_{\Logic, \Delta}^\phi)$ is the logic PL problem of $\phi$.
\end{itemize}

By taking inspiration from ProbLog and DeepProbLog, we now define a class of PNL systems for logic PL problems.
Consider a logical formula $\phi$.
The idea is to decompose $P_{\Logic, \Delta}^\phi$ into $m$ smaller logic PL problems $P_{\Logic, \Delta}^{\phi_1},...,P_{\Logic, \Delta}^{\phi_m}$ where $\phi_1,...,\phi_m$ represent simpler concepts.
A symbolic function $S$, based on axioms about the context, combines $\phi_1,...,\phi_m$ to determine the truth value of $\phi$.

\paragraph{Logic PNL system.}
Consider a logic $\Logic$ and a domain $\Delta$.
Suppose the existence of an axiom set $A = \{\alpha_1, ..., \alpha_n\}$ such that all $\alpha_i$ are true in $\Gamma_{\Logic, \Delta}$:
\begin{equation*}
    \bigcup_{[.]_\Func \in \inputspace_{\Logic, \Delta}}(B_{\Logic,\Delta}^{\alpha_i}([.]_\Func) = 0) = \emptyset
\end{equation*}
We write $\{\Phi_1,..., \Phi_l\} \vdash \Phi$ to denote that there exists a proof of $\Phi$ in $\Logic$ assuming $\Phi_1,...,\Phi_l$.
Now suppose there exist formulas $\phi_1,...,\phi_m$ such that:
\begin{itemize}
    \item $(P_{\Logic, \Delta}^{\phi_k})_{1\leq k \leq m}$ are independent logic PL problems,
    \item for all $\phi'_1,...,\phi'_m$ such that $\phi'_k = \phi_k$ or $\phi'_k = \neg \phi_k$:
    \begin{itemize}
        \item $A \cup \{ \phi'_1,...,\phi'_m \} \vdash \phi$ or $A \cup \{ \phi'_1,...,\phi'_m \} \vdash \neg\phi$
        \item $A \cup \{ \phi'_1,...,\phi'_m \} \not \vdash \bot$.
    \end{itemize}
    We say that the $\phi_1,...,\phi_m$ decide $\phi$ and are coherent with respect to $A$.
\end{itemize}
Under these assumptions, we have sufficient elements to prove that the following constitutes a PNL system for $P_{\Logic, \Delta}^\phi$: 
$\bigl((\{0, 1\})_{1\leq k \leq m}, (B_{\Logic, \Delta}^{\phi_k})_{1\leq k \leq m}, (\hat{\distrib}_k)_{1\leq k \leq m}, S\bigr)$, where:
\begin{itemize}
    \item for all $k \in [1,m]$, $\hat{\distrib}_k$ is a neural network which takes as input ground terms of $\phi_k$ interpreted as elements of $\Delta$ and returns an estimation of the probability distribution of $B_{\Logic, \Delta}^{\phi_k}$,
    \item for all $x_1,...,x_m \in \{0,1\}$, $S(x_1,...,x_m)$ is equal to \begin{center}
        $\begin{cases}
            1 \text{ else if } A \cup \{\phi_k | x_k=1\} \cup \{\neg \phi_k | x_k = 0\} \vdash \phi, \\
            0 \text{ otherwise.}
        \end{cases}$
    \end{center}
\end{itemize}
The proof is given in Appendix~\ref{proof:LogicPNLSystemCorrectness}.

\paragraph{Oracles for Logic PNL systems.}
Consider a logic PNL system such that the theory ($\Logic$,$A$) is decidable.
Then, Algorithm~\ref{alg:LogicOracle} describes the pseudocode of a valid and complete oracle for the function $S$ based on a decision procedure of ($\Logic$,$A$).
The proof is given in Appendix~\ref{proof:LogicOracleValidityCompletness}.

\begin{algorithm}[tb]
    \caption{Logic Oracle}
    \label{alg:LogicOracle}
    \textbf{Input}: $v \in \setofvaluations{}$, $o \in \outputspace$ \\
    \textbf{Output}: A valid oracle output for $S$
    \begin{algorithmic}[1] 

        \IF {$A \cup \{\phi_k | v[k] = 1\} \cup \{\neg \phi_k | v[k] = 0\} \vdash \neg \phi$}
            \STATE $res \gets 0$
        \ELSIF {$A \cup \{\phi_k | v[k] = 1\} \cup \{\neg \phi_k | v[k] = 0\} \vdash \phi$}
            \STATE $res \gets 1$.
        \ELSE
            \STATE $res \gets \unknown$.
        \ENDIF

        \IF {$res \neq \unknown \land o = 0$}
            \STATE $res \gets 1-res$.
        \ENDIF

        \STATE \textbf{return} $res$.
        
    \end{algorithmic}
\end{algorithm}

We can link this to standard frameworks such as DeepProbLog.
Indeed, the logic $\Logic$ corresponds to the logic of Prolog.
$\Delta$ is the set of possible inputs for neural networks, which are real-world data such as images.
The formula $\phi$ represents the query.
The rules annotated by neural neutworks in DeepProbLog correspond to the $\phi_1,...,\phi_m$, and the neural networks producing their probabilities are $\hat{\distrib}_1,...,\hat{\distrib}_k$.
Finally, $S$ corresponds to the logical provenance formula, as $S^{-1}(1)$ represents the set of proofs of $\phi$.
However, DeepProbLog differs slightly because it does not consider negation.
A detailed application to DeepProbLog is provided in Appendix~\ref{appendix:applicationtodeepproblog}.
This application is particularly interesting when the DeepProbLog program does not contain functors.
Indeed, in a Prolog program without negation and functors the query answering task is polynomial time in the size of the program as it can be transformed into an equivalent Horn-SAT instance.
An oracle based on a solver can thus answer in polynomial time with respect to the size of the program.
In comparison, under these conditions, the size of the logical provenance computed by DeepProbLog could be factorial in the square root of the size of the program (see Example~\ref{ex:GraphReach}).

\section{ApproxDPNL}
\begin{algorithm}[tb]
    \caption{ApproxDPNL($i,o,O_S,$\textsc{stop},\textsc{h})}
    \label{alg:ApproxDPNL}
    \textbf{Input}: $i \in \inputspace$, $o \in \outputspace$, $v \in \setofvaluations{}$, $O_S$ an oracle for $S$ \\
    \textsc{stop} : function taking a lower bound and an upper bound of the result and tells the algorithm to stop or not. \\
    \textsc{h} : An heuristic that choose a valuation from a set. \\
    \textbf{Output}: An estimation of $\prob(F(i) = o | w(v))$ \\
    \begin{algorithmic}[1] 
        \STATE Set $low, up := 0, 1$.
        \STATE Set $queue := \{[\unknown,...,\unknown]\}$.
        \WHILE {$|queue| > 0$ and $\neg \textsc{stop}(low,up)$}
            \STATE Set $v = \textsc{h}(queue)$.
            \STATE $queue \gets queue \backslash \{v\}$
            \STATE Set $K := \{k \in [1,m] | v[k] \neq \unknown\}$.
                \STATE Set $prob := \Pi_{k\in K} \prob(X_k(i) = v[k])$.
            \IF {$O_S(v,o) = 1$}
                \STATE $low \gets low + prob$
            \ELSIF {$O_S(v,o) = 0$}
                \STATE $up \gets up - prob$
            \ELSE
                \STATE Choose $k \in [1,m]$ such that $v[k] = \unknown$.
                \STATE $queue \gets queue \cup \{v_{v[k]\gets y} | y \in \Vset{k}\}$
            \ENDIF
        \ENDWHILE
        \STATE \textbf{return} $\sqrt{low \times up}$
    \end{algorithmic}
\end{algorithm}
DPNL can be readily adapted to approximate reasoning with guarantees on the approximation error. 
We consider $(\epsilon, \delta)$-approximation defined as follows, where $\epsilon$ denotes the relative error bounded by a parameter $\epsilon$ with probability $1-\delta$ \cite{chakraborty,WMC:Approx:Dubray2024}.

\paragraph{$(\epsilon, \delta)$-approximation.} An estimator $\hat{y}$ is an $(\epsilon, \delta)$-approximation when $\prob(|\frac{y-\hat{y}}{y}| > \epsilon) \leq \delta$.

\paragraph{$\epsilon$-approximation.} An estimator $\hat{y}$ is an $\epsilon$-approximation when $y / (1+\epsilon) \leq \hat{y} \leq y \cdot (1+\epsilon)$.

DPNL with $(\epsilon, \delta)$-approximations, named ApproxDPNL, is proposed in Algorithm \ref{alg:ApproxDPNL}. 
As proposed in \cite{WMC:Approx:Dubray2024}, ApproxDPNL can be stopped at any time $T$ and provides lower and upper bound guarantees on the result of the exact weighted model counting. 
The choice of the function \textsc{stop} determines the guarantee of the result, where $low$ and $up$ are the lower and upper bound of the approximation interval. 
 \begin{itemize}
    \item If \textsc{stop}($low,up$)$:= \begin{cases}
        1 \text{ if } up \leq low \cdot (1+\epsilon)^2 \\
        0 \text{ else }
    \end{cases}$, the result of ApproxDPNL is an $\epsilon$-approximation of $\prob(F(i) = o | w(v))$ (see \cite{WMC:Approx:Dubray2024}).
    \item If \textsc{stop}($low,up$)$:= \begin{cases}
        1 \text{ if } up-low \leq \epsilon \\
        0 \text{ else }
    \end{cases}$, the result of ApproxDPNL $r$ respects the following property $\prob(F(i) = o | w(v) - \epsilon \leq r \leq \prob(F(i) = o | w(v) + \epsilon$.
    \item If \textsc{stop} returns $1$ after time $T$ has passed, the result of ApproxDPNL is a biased approximation of $\prob(F(i) = o | w(v))$ that took time $T$ to compute.
\end{itemize}
The heuristic \textsc{h} controls in which order ApproxDPNL explores sub-valuations.
Intuitively, it is important to prioritise the exploration of sub-valuations that are the most probable so that the size of the approximation interval $|up-low|$ narrows faster.
However, it is also possible for use a random choice as \textsc{h}, which results in uniform sampling. 
 
\section{Experiments}
We experimentally evaluate DPNL in order to answer the following research questions:

\begin{itemize}
    \item[\textbf{Q\#1}] How does DPNL compare to PNL systems with exact reasoning in terms of efficiency and accuracy? 
    \item[\textbf{Q\#2}] How does DPNL compare to PNL systems with approximate reasoning in the state-of-the-art in terms of efficiency and accuracy? 
    \item[] \item[\textbf{Q\#3}] How does ApproxDPNL compare to exact and approximate reasoning methods in the state-of-the-art in terms of efficiency and accuracy? 
\end{itemize}
\paragraph{Task.}
We evaluate DPNL on the popular MNIST-N-SUM task \cite{NeSy:DeepProbLog:Manhaeve2018} as introduced in Section~\ref{ex:mnist_sum}. 
By extending from an N digits sum to an N+1 digits sum, we increase the complexity of the reasoning task exponentially. 
Thus, the multi-digit MNIST-N-SUM task is well-suited to investigate the scaling properties of DPNL.
We use the MNIST dataset from \cite{Article:LeCun:MNIST} that contains 60K training images and 10K test images. 
In consequence, with respect to N, the training set contains 60K / 2N images and the test set contains 10K / 2N images so that every image is used once.  
Each training sample consists of the images for the two summands and the supervision label on their ground truth sum. 

\paragraph{Experimental Setup.}
We compare DPNL with the PNL systems DeepProbLog \cite{NeSy:DeepProbLog:Manhaeve2018} and Scallop \cite{NeSy:Scallop:Huang2021} that use exact inference (\textbf{Q\#1}), and with A-Nesi \cite{NeSy:A-NeSI:vanKrieken2023}, DPLA$^*$ \cite{NeSy:DeepProbLog:Manhaeve2021} and Scallop (with the top-k semiring), which use approximate inference (\textbf{Q\#2}).
A-Nesi is evaluated across its different variants (predict only, explain, prune). 
We further test ApproxDPNL compare the results to the previously mentioned exact and approximate methods (\textbf{Q\#3}).
We choose a range of different values for reasoning timeouts $T=\{0.05, 0.075, 0.1, 0.5\}$ for ApproxDPNL. 
We evaluate the test accuracy and the runtime of the symbolic reasoning component.
To ensure comparability, we maintain consistent experimental settings across all models, by reproducing baseline results on the same hardware and using the same parameters.

Furthermore, we also want to compare exact and approximate reasoning. 
To address this, we use a common set of hyperparameters. 
For all methods, we employ the MNIST classifier architecture proposed in \cite{NeSy:DeepProbLog:Manhaeve2018}.
The Adam optimizer \cite{kingma2017adam}, a batch size of 2, a single training epoch, and a learning rate of $1\mathrm{e}{-3}$ are used.
Each experiment is performed with multiple independent runs, and we report the average test accuracy and the average reasoning time per sample during training.
Further, we enforce timeout thresholds, terminating any experiment with a run that exceeds the limit of six hours.
Details on hyperparameter values, hardware and information on baseline configurations 
are provided in Appendix~\ref{appendix:experimental_details}.

\paragraph{Results}
\begin{table*}[ht]
    \centering
    \renewcommand{\arraystretch}{1.2} 
    \resizebox{\textwidth}{!}{ 
    \begin{tabular}{|l||c|c|c||c|c|c||c|c|c||c|c|c|}
    \hline
    \textbf{} & \multicolumn{3}{c||}{\textbf{$\mathbf{N=1}$}} & \multicolumn{3}{c||}{\textbf{$\mathbf{N=2}$}} & \multicolumn{3}{c||}{\textbf{$\mathbf{N=3}$}} & \multicolumn{3}{c|}{\textbf{$\mathbf{N=4}$}} \\
    \hline
                    & Acc & Std & Time & Acc & Std & Time & Acc & Std & Time & Acc & Std & Time \\
    \hline
    \textbf{DPNL}     & 0.9550 & 0.0090 & 0.0011 & 0.9116 & 0.0106 & \textbf{0.0099} & 0.8787 & 0.0100 & 0.1079 & 0.8262 & 0.0211 & 1.2485 \\
    \textbf{DeepProbLog}  & 0.9556 & 0.0060 & 0.0238 & 0.9098 & 0.0122 & 0.4784 & T/O & T/O &  T/O & T/O & T/O& T/O \\
    \textbf{Scallop (exact)}  & 0.9515 & 0.0037 & 0.0594 & T/O & T/O & T/O & T/O & T/O & T/O & T/O & T/O & T/O \\
    \hline
    \hline 
    \textbf{ApproxDPNL (T=0.5)}  & 0.9552 & 0.0100 & 0.0038 & \textbf{0.9147} & 0.0165 & 0.0163 & 0.8842 & 0.0079 & 0.1694 & 0.8334 & 0.0231 & 0.5381 \\
    \textbf{ApproxDPNL (T=0.1)}  & 0.9537 & 0.0107 & 0.0037 & 0.9121 & 0.0230 & 0.0166 & \textbf{0.8892} & 0.0108 & 0.0962 & 0.8293 & 0.0192 & 0.1052 \\
    \textbf{ApproxDPNL (T=0.075)}  & 0.9524 & 0.0097 & 0.0038 & 0.9055 & 0.0251 & 0.0160 & 0.8709 & 0.0146 & 0.0747 & \textbf{0.8404} & 0.0080 & 0.0797 \\
    \textbf{ApproxDPNL (T=0.05)}  & \textbf{0.9570} & 0.0086 & 0.0034 & 0.9060 & 0.0184 & 0.0162 & 0.8616 & 0.0126 & 0.0512 & 0.8354 & 0.0165 & 0.0535 \\
    \textbf{DPL A*}  & 0.8120 & 0.1635 & 0.0213 & 0.8945 & 0.0146 & 0.0255 & 0.8365 & 0.0303 & 0.1237 & T/O & T/O & T/O \\
    \textbf{Scallop (top-k)}  & 0.9512 & 0.0076 & \textbf{0.0009} & 0.8257 & 0.2060 & 0.0244 & 0.7523 & 0.2612 & 0.9521 & T/O & T/O & T/O \\
    \textbf{A-Nesi (predict)}  & 0.9435 & 0.0071 & 0.0328 & 0.8849 & 0.0142 & 0.0265 & 0.7796 & 0.0074 & \textbf{0.0170} & 0.5929 & 0.0394 & \textbf{0.0221} \\
    \textbf{A-Nesi (explain)}  & 0.9510 & 0.0066 & 0.0348 & 0.8952 & 0.0071 & 0.0213 & 0.7707 & 0.0179 & 0.0219 & 0.6584 & 0.0239 & 0.0268 \\
    \textbf{A-Nesi (prune)}  & 0.9489 & 0.0070 & 0.0251 & 0.7866 & 0.2225 & 0.0195 & 0.7706 & 0.0214 & 0.0206 & 0.5905 & 0.0339 & 0.0245 \\
    \hline
    \end{tabular}
    }
    \caption{Results on the MNIST-$N$-SUM task for $N=\{1,2,3,4\}$.
    We report the average test accuracy (acc), its standard deviation (std) and the average reasoning time per sample (time) in seconds.
    T/O denotes a computational timeout. 
    The systems with exact reasoning are DPNL, DeepProbLog and Scallop. 
    The systems with approximate reasoning are DPLA$^*$, Scallop with the top-k-proofs semiring and A-Nesi in the variants predict-only (predict), explain and prune.
    The highest test accuracy numbers and the shortest reasoning times are marked in bold.}
    \label{tab:performance_comparison}
\end{table*}
The results of the experiments are given in Table~\ref{tab:performance_comparison}. 
For \textbf{Q\#1}, we first compare DPNL with other PNL systems using exact inference. 
DeepProbLog achieves similar accuracy to DPNL for N=1 and N=2. 
In terms of reasoning time per sample, DPNL outperforms DeepProbLog, with the advantage becoming more noticeable as task complexity increases.
Scallop with exact reasoning only succeeds to complete the task $N=1$. 
For $N\geq3$, DeepProbLog fails to finish, while DPNL achieves an accuracy of 0.82 for $N=4$ with an average reasoning time per sample of $1.24$s. 
Among all the exact PNL systems considered, DPNL is the only one that scales to $N=4$. 

When comparing exact and approximate reasoning systems for \textbf{Q\#2}, exact reasoning positively impacts accuracy, and this effect becomes more pronounced as the task difficulty increases.
For $N=1$, the mean test accuracy of all systems considered is centered around the value $0.95$ (except DPLA$^*$).
However, for $N=2$ there is already a noticeable accuracy gap between approximate methods and exact methods such as DeepProbLog and DPNL.
This gap becomes even larger for $N=3$, where DPNL achieves a considerably higher accuracy of $0.87$ compared to the approximate methods, which do not exceed $0.83$.
For $N=4$, even the approximate methods such as DPLA$^*$ and Scallop (top-k) fail to complete their runs, while DPNL finishes successfully.
The only baseline that finishes for $N=4$ is A-Nesi. 
All variants of A-Nesi have noticeably lower reasoning times per sample compared to DPNL. 
However, DPNL has a clear advantage in terms of accuracy. 
The most accurate A-Nesi model is the explain variant, which achieves an accuracy of $0.65$, while DPNL reaches an accuracy of $0.82$.

Regarding \textbf{Q\#3}, ApproxDPNL controls the total runtime by setting a timeout $T$ during the reasoning process per sample. 
Over all tasks $N=\{1,2,3,4\}$, ApproxDPNL achieves an accuracy competitive to DPNL with exact reasoning. 
For the approximate reasoning methods, although A-Nesi still has an advantage in terms of run time per sample, ApproxDPNL clearly outperforms A-Nesi in terms of test set accuracy. 

\section{Related Works}
There is a large body of work on neuro-symbolic methods aimed at bridging the gap between learning and reasoning. Within this field, PNL systems can be categorized into two groups: those based on exact reasoning and those based on approximate reasoning.

\paragraph{PNL systems with exact reasoning.}
DeepProbLog \cite{NeSy:DeepProbLog:Manhaeve2018}, one of the first PNL systems, is often used as a benchmark for accuracy, establishing itself as a foundational framework in PNL.
DeepProbLog is based on a probabilistic extension of Prolog called ProbLog \cite{NeSy:ProbLog:DeRaedt2007} where neural networks are used to estimate probabilities of extensional predicates. 
ProbLog reduces the probabilistic reasoning task to the PWMC problem.
However, two key scalability issues arise. 
First, the logical provenance is initially computed in DNF, whose size can grow exponentially depending on the task. Even when the clauses of the logical provenance are stored in a prefix tree, its size can grow exponentially with the number of neural ground atoms.
Second, solving the PWMC itself is \#P-hard causing blowups in the size of the compiled forms of the logical provenance. Specifically, the size of the Binary Decision Diagram  can increase exponentially (as detailed in Appendix~\ref{appendix:applicationtodeepproblog}). 

NeurASP \cite{NeurASP} uses a similar approach but relies on Answer Set Programming. 
It considers the neural network output as the probability distribution over atomic
facts in answer set programs.

As noticed in \cite{NeSy:A-NeSI:vanKrieken2023}, DeepProbLog, NeurASP and other methods in this line of works \cite{pmlr-v80-xu18h,ahmed2022} provide exact inference with probabilistic circuit methods \cite{ProbCirc20}. 
However, they face significant scaling issues and struggle to handle large problem instances effectively. 
In comparison, DPNL provides an alternative approach, that does not depend on a reduction that requires \emph{binary} random variables. Instead, it makes it possible to write oracles with respect to any symbolic function taking general values as input. 

\paragraph{PNL systems with approximate reasoning.}

To address the scalability issue, several approaches have been developed to relax the exact reasoning process with approximations. 

DeepStochLog \cite{NeSy:DeepStochLog:Winters2022} employs the semantics of stochastic definite clause grammars that achieves better scaling properties thanks to a reduced search space. 

Scallop \cite{NeSy:Scallop:Huang2021} computes only the $k$ most probable clauses of the logical provenance  and performs PWMC on the disjunction of these $k$ clauses. 
This approach thus limits both the size of the formula and the complexity of PWMC to a polynomial function of $k$. 
DPLA$^*$ \cite{NeSy:DeepProbLog:Manhaeve2021} is based on DeepProbLog, but uses only a subset of all proofs. 
It employs A*-search and heuristics to efficiently search for the best proofs. 
However, while being polynomial in complexity, these approximations lack guarantees and may fail when the logical provenance's essential structure is not captured by a few clauses.

Other systems relying on similar polynomial approximations have been developed. In particular, A-Nesi \cite{NeSy:A-NeSI:vanKrieken2023} uses deep generative modeling to improve scalability.
It employs a neural prediction model that performs approximate inference over PWMC problems and a neural explanation model that computes which world best explains a prediction. 
A prior belief function is used to sample data corresponding to the knowledge in order to train these models. 
A-Nesi can also be used in conjunction with a symbolic pruner that sets the probability of certain variables to zero and reduces the search space.

One of the important drawbacks of using approximations is that it undermines one of the primary strengths of PNL: confidence in the model's reliability.
Without a guarantee that the approximation is close to the true result, the model's reliability may be compromised. 
%
%
Therefore, it makes sense to employ approximations with guarantees, especially during inference when the model is deployed.

In comparison, DPNL makes it possible to obtain more accurate models since it pushes the boundaries of exact inference further.
In addition, the extension to ApproxDPNL enables even better scalability through approximations, while providing guarantees on the approximation error.

\section{Conclusion}
We have presented DPNL, a new approach for exact probabilistic reasoning in neurosymbolic learning. The core feature of DPNL lies in its use of oracles, which not only eliminate the need for computing logical provenance formulas, but also allow for significant acceleration of the inference process. 
Oracles can be automatically generated or customized for further efficiency gains, making DPNL a flexible solution.
The correctness and termination of DPNL are formally proved.
We furthermore introduced ApproxDPNL that provides approximate reasoning with guarantees on the approximation error in the context of DPNL. 
Experimental results firstly show that DPNL outperforms existing exact inference methods, handling more complex tasks with greater efficiency and producing more accurate models compared to approximate reasoning techniques.
Secondly, ApproxDPNL has shown to further improve the scalability of neurosymbolic programming by incorporating approximations with guarantees on the tolerated error at inference.
DPNL and ApproxDPNL thus lay a promising foundation for the development of highly accurate models across a wide range of challenging tasks that involve perception and reasoning.

\bibliographystyle{named}
\bibliography{ijcai25}

\newpage
\appendix

\section{ProbDPLL}\label{appendix:proof:ProbDPLLCorrectness}

\subsection{Notation details}

ProbDPLL is described in Algorithm \ref{alg:ProbDPLL}.
We first refine some notations.
$\mathbf{X}$ is a set of (independent) binary (random) variables.
A Boolean formula is a function from the set of valuations $\Functions{\mathbf{X}}{\{0,1\}}$ into $\{0, 1\}$.
For all $X \in \mathbf{X}$, we also denote by $X$ the Boolean formula such that for all valuation $v \in \Functions{\mathbf{X}}{\{0,1\}}$, $X(v) = 1$ iff $v(X)=1$.
We use logical operators $\land$, $\lor$ and $\neg$ to build Boolean formulas.
A Boolean formula $G$ is in CNF when
\begin{center}
    $G = \overset{N}{\underset{i=1}{\land}}\overset{M_i}{\underset{j=1}{\lor}}L_{i,j}$
\end{center}
where $N$ is the number of clauses in $G$, $M_i$ is the number of atoms in the $i$-th clause of $G$ and $L_{i,j}$ is a literal $X$ or $\neg X$ with $X\in \mathbf{X}$ for all $i,j$.
For all $X \in \mathbf{X}$,
\begin{itemize}
    \item $G_{|X=0}$ denotes $G$, where every clause containing $\neg X$ is set to true and where $X$ is eliminated from every clause in which it occurred.
    \item $G_{|X=1}$ denotes $G$, where every clause containing $X$ is set to true and where $\neg X$ is eliminated from every clause in which it occurred.
\end{itemize}
Consider the following relation $G \equiv (G_{|X=0} \land \neg X) \lor (G_{|X=1} \land X)$ (see \cite{WMC:DPLL:Davis1962}).
Given a probability distribution $\sigma : \mathbf{X} \rightarrow [0,1]$ of the variables in $\mathbf{X}$, we note $\text{PWMC}_G^\sigma$ the probabilistic weighted model count of $G$ w.r.t. $\sigma$ as defined in Section \ref{definition:PWMC}.

\subsection{Vizualisation} \label{appendix:probdpllviz}
Figure~\ref{fig:ProbDPLL} illustrates the trace of $\text{ProbDPLL}(G, \sigma)$, see Algorithm \ref{alg:ProbDPLL}, for the example formula $G = (A \lor B \lor \neg C) \land (A \lor B \lor C) \land (B \lor \neg A) \land (\neg B \lor C) \land (\neg B \lor \neg C)$ as a tree. 
Each node stands for a recursive call.
Since $G$ is not satisfiable, $PWMC_G^\sigma$ returns $0$.

\begin{figure}[H]

    \begin{tikzpicture}[every node/.style={fill=white}]
        \tiny
        
        \node[rectangle, draw] (f) at (0,0) {
            \begin{tabular}{l}
                ProbDPLL \\
                \hline
                Input : $G = (A \lor B \lor \neg C) \land (A \lor B \lor C) \land $\\$ (B \lor \neg A) \land (\neg B \lor C) \land (\neg B \lor \neg C)$ \\
                Input : $\sigma$ \\
                \hline
                Return : $0$
            \end{tabular}           
        };

        \node[rectangle, draw] (f0) at (-0.25,-2) {
            \begin{tabular}{l}
                ProbDPLL \\
                \hline
                Input : $G = (A \lor \neg C) \land (A \lor C) \land \neg A$ \\
                Input : $\sigma$ \\
                \hline
                Return : $0$
            \end{tabular}           
        };
        \path [->, thick] (f) edge node {$G_{|B=0}$} (f0);

        \node[rectangle, draw] (f00) at (-1.25,-4) {
            \begin{tabular}{l}
                ProbDPLL \\
                \hline
                Input : $G=\neg C \land C$ \\
                Input : $\sigma$ \\
                \hline
                Return : $0$
            \end{tabular}           
        };
        \path [->, thick] (f0) edge node {$G_{|A=0}$} (f00);
    
        \node[rectangle, draw] (f000) at (-1.6,-6) {
            \begin{tabular}{l}
                ProbDPLL \\
                \hline
                Input : $G=\bot$ \\
                Input : $\sigma$ \\
                \hline
                Return : $0$
            \end{tabular}           
        };
        \path [->, thick] (f00) edge node {$G_{|C=0}$} (f000);
        
        \node[rectangle, draw] (f001) at (0.5,-6) {
            \begin{tabular}{l}
                ProbDPLL \\
                \hline
                Input : $G=\bot$ \\
                Input : $\sigma$ \\
                \hline
                Return : $0$
            \end{tabular}           
        };
        \path [->, thick] (f00) edge node {$G_{|C=1}$} (f001);

        \node[rectangle, draw] (f01) at (3,-6) {
            \begin{tabular}{l}
                ProbDPLL \\
                \hline
                Input : $G=\bot$ \\
                Input : $\sigma$ \\
                \hline
                Return : $0$
            \end{tabular}           
        };
        \path [->, thick] (f0) edge node {$G_{|A=1}$} (f01);

        \node[rectangle, draw] (f1) at (4.5,-2) {
            \begin{tabular}{l}
                ProbDPLL \\
                \hline
                Input : $G = C \land \neg C$ \\
                Input : $\sigma$ \\
                \hline
                Return : $0$
            \end{tabular}           
        };
        \path [->, thick] (f) edge node {$G_{|B=1}$} (f1);
    
        \node[rectangle, draw] (f10) at (5.5,-4) {
            \begin{tabular}{l}
                ProbDPLL \\
                \hline
                Input : $G=\bot$ \\
                Input : $\sigma$ \\
                \hline
                Return : $0$
            \end{tabular}          
        };
        \path [->, thick] (f1) edge node {$G_{|C=1}$} (f10);
        
        \node[rectangle, draw] (f11) at (3.25,-4) {
            \begin{tabular}{l}
                ProbDPLL \\
                \hline
                Input : $G=\bot$ \\
                Input : $\sigma$ \\
                \hline
                Return : $0$
            \end{tabular}              
        };
        \path [->, thick] (f1) edge node {$G_{|C=0}$} (f11);
                
    \end{tikzpicture}

    \caption{Trace of $\text{ProbDPLL}(G,\sigma)$ where $G = (A \lor B \lor \neg C) \land (A \lor B \lor C) \land (B \lor \neg A) \land (\neg B \lor C) \land (\neg B \lor \neg C)$.}
    \label{fig:ProbDPLL}
\end{figure}

\subsection{Termination and Correctness}
In this Section we proof the termination and correctness of ProbDPLL. 

\begin{theorem}
    For every 
    \begin{itemize}
        \item Boolean formula $G = \overset{N}{\underset{i=1}{\land}}\overset{M_i}{\underset{j=1}{\lor}}L_{i,j}$ in CNF over a finite set of (independent) binary (random) variables $\mathbf{X}$,
        \item probability distribution $\sigma : \mathbf{X} \rightarrow [0,1]$,
    \end{itemize} 
    $\text{ProbDPLL}(G,\sigma)$ terminates and returns $\text{PWMC}_G^\sigma$.
\end{theorem}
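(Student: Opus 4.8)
The plan is to prove termination and correctness by strong induction on the number of variables that actually occur in $G$. First I would set up a suitable induction measure: let $\mathrm{vars}(G)$ be the set of variables appearing in at least one (non-trivially-true) clause of $G$, and let $n = |\mathrm{vars}(G)|$. The base cases correspond exactly to the two terminating branches of Algorithm~\ref{alg:ProbDPLL}: either $G$ has no clauses, in which case every valuation is a model so $\text{PWMC}_G^\sigma = \sum_{\psi} w_\psi^\sigma = 1$ (the weights form a product of distributions that each sum to $1$), and the algorithm correctly returns $1$; or $G$ contains the empty clause, in which case $G$ has no model, $\text{PWMC}_G^\sigma = 0$, and the algorithm returns $0$. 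For the inductive step, when $G$ has at least one clause and no empty clause, the algorithm picks some $X \in \mathrm{vars}(G)$; I would argue that both $G_{|X=1}$ and $G_{|X=0}$ have strictly fewer occurring variables than $G$ (since $X$ is eliminated and no new variable is introduced), so by the induction hypothesis the two recursive calls terminate and return $\text{PWMC}_{G_{|X=1}}^\sigma$ and $\text{PWMC}_{G_{|X=0}}^\sigma$ respectively. Termination of $\text{ProbDPLL}(G,\sigma)$ then follows immediately.

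\textbf{The key algebraic identity.} The heart of the correctness argument is to show
\[
\text{PWMC}_G^\sigma = \sigma(X)\cdot\text{PWMC}_{G_{|X=1}}^\sigma + (1-\sigma(X))\cdot\text{PWMC}_{G_{|X=0}}^\sigma,
\]
which is exactly what the algorithm returns on line~9. To establish this I would partition the models of $G$ according to the value they assign to $X$: $G^{-1}(1) = \{\psi : \psi(X)=1,\ G(\psi)=1\} \sqcup \{\psi : \psi(X)=0,\ G(\psi)=1\}$. Using the logical equivalence $G \equiv (G_{|X=0} \land \neg X) \lor (G_{|X=1} \land X)$ recalled in the notation subsection, a valuation $\psi$ with $\psi(X)=1$ satisfies $G$ iff its restriction $\psi'$ to $\mathbf{X}\setminus\{X\}$ satisfies $G_{|X=1}$, and symmetrically for $\psi(X)=0$ and $G_{|X=0}$. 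Then I would factor the weight: for $\psi$ with $\psi(X)=1$, $w_\psi^\sigma = \sigma(X)\cdot w_{\psi'}^\sigma$ where $\psi'$ is the restriction to the remaining variables (and the weight $w_{\psi'}^\sigma$ is computed over the distribution $\sigma$ restricted to $\mathbf{X}\setminus\{X\}$, which is legitimate since the variables are independent). Summing over the two classes and pulling out the common factors $\sigma(X)$ and $(1-\sigma(X))$ gives the identity. Combined with the induction hypothesis, this yields that $\text{ProbDPLL}(G,\sigma)$ returns $\text{PWMC}_G^\sigma$.

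\textbf{Main obstacle.} The subtle point — and the one I would be most careful about — is the bookkeeping around variable sets and weights when a variable is eliminated. In the definition of $\text{PWMC}_G^\sigma$ the sum ranges over valuations of \emph{all} variables in $\mathbf{X}$, but after restriction the recursive call works with a formula over a smaller variable set; one has to be precise about whether $\text{PWMC}$ is taken over $\mathbf{X}$ or over $\mathbf{X}\setminus\{X\}$, and verify that the factor-of-$\sigma(X)$ accounting matches up exactly (in particular, a variable that occurs in $G$ but disappears in $G_{|X=b}$ because all its clauses became true still contributes a factor summing to $1$, so it is harmless). A clean way to handle this is to fix $\mathbf{X}$ once and for all as the ambient variable set for every sub-formula that arises, so that $\text{PWMC}$ is always a sum over $\Functions{\mathbf{X}}{\{0,1\}}$; then the identity above holds verbatim and the only thing the induction measure tracks is the number of \emph{occurring} variables, which is what drives termination. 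Everything else is routine.
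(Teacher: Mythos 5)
Your proposal is correct and takes essentially the same route as the paper's proof: induction on the number of variables, the splitting identity $G \equiv (G_{|X=0}\land\neg X)\lor(G_{|X=1}\land X)$, partitioning the models of $G$ by the value assigned to $X$, and factoring $\sigma(X)$ (resp. $1-\sigma(X)$) out of the weights before applying the induction hypothesis to $G_{|X=1}$ and $G_{|X=0}$ over $\mathbf{X}\setminus\{X\}$. The only cosmetic differences are that you induct on the number of \emph{occurring} variables rather than $|\mathbf{X}|$ and handle the two immediate-return branches at every recursion level rather than only in the base case, which if anything is slightly tidier than the paper's treatment.
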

\begin{proof}
    We proceed by induction for $n$ over the number of variables $|\mathbf{X}|$ in $G$. 
    We define the induction hypothesis $H_n$:
    \begin{quote}
    \emph{For every Boolean formula $G = \overset{N}{\underset{i=1}{\land}}\overset{M_i}{\underset{j=1}{\lor}}L_{i,j}$ in CNF over a finite set of (independent) binary (random) variables $\mathbf{X}$ and a probability distribution $\sigma : \mathbf{X} \rightarrow [0,1]$ such that $|\mathbf{X}| \leq n$, $\text{ProbDPLL}(G,\sigma)$ terminates and returns $\text{PWMC}_G^\sigma$.}
    \end{quote}

    \noindent We can then prove $H_n$ for all $n \geq 0$ by induction:
    \paragraph{\underline{\normalfont{Base case }$H_0$}:} Let
        \begin{itemize}
            \item $G = \overset{N}{\underset{i=1}{\land}}\overset{M_i}{\underset{j=1}{\lor}}L_{i,j}$ a Boolean formula in CNF over $\mathbf{X}$ a finite set of (independent) binary (random) variables,
            \item $\sigma : \mathbf{X} \rightarrow [0,1]$,
        \end{itemize}
        such that $|\mathbf{X}| \leq 0$.
        Therefore, $G$ is constant because $\Functions{\emptyset}{\{0,1\}}$ only contains one element $v_0$.

        Moreover,
        \begin{align*}
            \text{PWMC}_G^\sigma := & \sum_{v \in G^{-1}(1)}\prod_{X \in v^{-1}(1)}\sigma(X)\prod_{X \in v^{-1}(0)}(1-\sigma(X)) \\
                                  = & \sum_{v \in G^{-1}(1)}1 \\
                                  = & \text{ }G(v_0).
        \end{align*}
        There are two cases:
        \begin{itemize}
            \item $N=0$:
            In this case $\text{ProbDPLL}(G,\sigma)$ terminates and returns $1$ because $G$ has no clauses.
            Since $N=0$, $G$ is constant and equal to the empty conjunction, so $G$ is equal to $1$.
            It follows $\text{ProbDPLL}(G,\sigma) = \text{PWMC}_G^\sigma = 1$.
            \item $N>0$:
            In this case, $\text{ProbDPLL}(G,\sigma)$ terminates and returns $0$ because $G$ contains the empty clause.
            Since $N>0$, $G$ is constant and equal to a conjunction of empty clauses, so $G$ is constant and equal to $0$.
            It follows $\text{ProbDPLL}(G,\sigma) = \text{PWMC}_G^\sigma = 0$.
        \end{itemize}

        \paragraph{\underline{\normalfont{Induction step: }$H_n \implies H_{n+1}$}:} Let $n\geq0$ and suppose that $H_n$ holds. Consider
        \begin{itemize}
            \item a Boolean formula $G = \overset{N}{\underset{i=1}{\land}}\overset{M_i}{\underset{j=1}{\lor}}L_{i,j}$ in CNF over a finite set of (independent) binary (random) variables $\mathbf{X}$,
            \item a probability distribution $\sigma : \mathbf{X} \rightarrow [0,1]$
        \end{itemize}
        such that $|\mathbf{X}| \leq n+1$.
        If $|\mathbf{X}| < n+1$, we can directly apply $H_n$ to conclude. 
        So we consider that $|\mathbf{X}| = n+1$.
        Let $X\in \mathbf{X}$ be the chosen variable in line 6 of Algorithm \ref{alg:ProbDPLL}. 
        We have:
        \begin{itemize}
            \item $G \equiv G_0 \lor G_1$ with $G_0 = (G_{|X=0} \land \neg X)$ and $G_1 = (G_{|X=1} \land X)$.
            \item Since $G_0$ contains $\neg X$, it follows for all $v \in G_0^{-1}(1)$ $v(X) = 0$.
            \item Since $G_1$ contains $X$, it follows for all $v \in G_1^{-1}(1)$, $v(X) = 1$.
        \end{itemize}
        Therefore, $G_0^{-1}(1)$ and $G_1^{-1}(1)$ are disjoint and form a partition of $G^{-1}(1)$.
        Since $\text{PWMC}_G^\sigma := $
        \begin{align*}
        & \sum_{v \in G^{-1}(1)}\prod_{Y \in v^{-1}(1)}\sigma(Y) \cdot \prod_{Y \in v^{-1}(0)}(1-\sigma(Y))
        \end{align*}
        by dividing the sum in two parts, we obtain $\text{PWMC}_G^\sigma = \text{PWMC}_{G_0}^\sigma + \text{PWMC}_{G_1}^\sigma$.

        We now prove that
        \begin{itemize}
            \item $\text{PWMC}_{G_0}^\sigma = (1-\sigma(X))\cdot\text{PWMC}_{G_{|X=0}}^\sigma$ and 
            \item $\text{PWMC}_{G_1}^\sigma = \sigma(X)\cdot\text{PWMC}_{G_{|X=1}}^\sigma$.
        \end{itemize}
        Since the two proofs are similar, we only show the proof for $G_1$.
        For every $v \in G_1^{-1}(1)$, the restriction of $v$ to $\mathbf{X}\backslash \{X\}$, noted as $v_{|\mathbf{X}\backslash \{X\}}$, is a model of $G_{|X=1}$, i.e. $v_{|\mathbf{X}\backslash X} \in G_{|X=1}^{-1}(1)$.
        And for every $v' \in G_{|X=1}^{-1}(1)$ the extension of $v'$ to $\mathbf{X}$ by setting $v'(X)=1$ is a model of $G_1$.
        Indeed, $G_{|X=1}$ is a Boolean formula over $\mathbf{X}\backslash \{X\}$ and $G_1 = (G_{|X=1} \land X)$.
        From this follows: 
        \begin{align*}
            \text{PWMC}_{G_1}^\sigma := & \sum_{v \in G_1^{-1}(1)}\prod_{Y \in v^{-1}(1)}\sigma(Y)\prod_{Y \in v^{-1}(0)}(1-\sigma(Y)) \\
            = & \sum_{v' \in G_{|X=1}^{-1}(1)}(\sigma(X)\cdot\prod_{Y \in v'^{-1}(1)}\sigma(Y) \cdot\\
              & \prod_{Y \in v'^{-1}(0)}(1-\sigma(Y))) \\
            = & \sigma(X) \cdot \text{PWMC}_{G_{|X=1}}^\sigma.
        \end{align*}
        Thus $\text{PWMC}_{G_1}^\sigma = \sigma(X)\cdot\text{PWMC}_{G_{|X=1}}^\sigma$.
        Following the same considerations, we derive $\text{PWMC}_{G_0}^\sigma = (1-\sigma(X))\cdot\text{PWMC}_{G_{|X=0}}^\sigma$.

        Since $G_{|X=1}$ and $G_{|X=0}$ are Boolean formula in CNF over $\mathbf{X}\backslash \{X\}$ and $|\mathbf{X}|=n+1$, we can apply $H_n$: \begin{itemize}
            \item $\text{ProbDPLL}(G_{|X=1}, \sigma)$ terminates and returns $\text{PWMC}_{G_{|X=1}}^\sigma$.
            \item $\text{ProbDPLL}(G_{|X=0}, \sigma)$ terminates and returns $\text{PWMC}_{G_{|X=0}}^\sigma$.
        \end{itemize}
        In conclusion, we have:
        \begin{align*}
            \text{PWMC}_G^\sigma = & \text{PWMC}_{G_0}^\sigma + \text{PWMC}_{G_1}^\sigma \\
                                 = & (1-\sigma(X))\cdot \text{PWMC}_{G_{|X=0}}^\sigma + \\
                                   & \sigma(X)\cdot\text{PWMC}_{G_{|X=1}}^\sigma \\
                                 = & (1-\sigma(X))\cdot \text{ProbDPLL}(G_{|X=0}, \sigma) + \\
                                   & \sigma(X)\cdot\text{ProbDPLL}(G_{|X=1}, \sigma)
        \end{align*}
        Thus $\text{ProbDPLL}(G, \sigma)$ terminates and returns $\text{PWMC}_G^\sigma$.
     
\end{proof}

\section{DPNL Termination and Correctness}\label{appendix:proof:DPNLCorrectness}
In this Section we prove the termination and correctness of DPNL. 
\begin{theorem}
    For every
    \begin{itemize}
        \item PL problem $((\Omega, \tribe, \prob), \inputspace, \outputspace, F)$,
        \item PNL system $((\Vset{k})_{1\leq k \leq m}, (\Xvar{k})_{1\leq k \leq m}, (\hat{\distrib_{k}})_{1\leq k \leq m}, S)$ for the PL problem,
        \item valid oracle $O_S$ for $S$,
        \item $i \in \inputspace$, $o \in \outputspace$ and $v \in \setofvaluations{}$,
    \end{itemize}   
    $\text{DPNL}(i,o,v,O_S)$ terminates and returns $\prob(F(i) = o | w(i,v))$.
\end{theorem}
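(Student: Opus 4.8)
The natural approach is induction on the number of $\unknown$ entries in the valuation $v$, mirroring the structure of the ProbDPLL proof but accounting for the fact that DPNL works with an oracle rather than an explicit CNF formula, and with finite-domain variables rather than binary ones. The key quantity to track is $n(v) = |\{k \in [1,m] : v[k] = \unknown\}|$, which strictly decreases in each recursive call at line~\ref{alg:DPNL:useprob}, guaranteeing termination. For correctness, I would prove by induction on $n(v)$ the statement that $\text{DPNL}(i,o,v,O_S) = \prob(F(i) = o \mid w(i,v))$ for all inputs $i$, outputs $o$ and valuations $v$, assuming all the hypotheses of the theorem. A small technical point to handle first: when $w(i,v)$ has probability zero the conditional probability is ill-defined, so I would either assume $\prob(w(i,v)) > 0$ throughout or adopt the convention that makes the formula hold trivially; in the intended application $v$ starts as $[\unknown,\dots,\unknown]$ so $w(i,v) = \Omega$ and this is a non-issue, but it is worth a remark.

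\textbf{Base case ($n(v) = 0$).} Here $v$ is total, so by the first clause in the definition of a valid oracle, $O_S(v,o) = 1$ if $S(v[1],\dots,v[m]) = o$ and $0$ otherwise. In the first sub-case DPNL returns $1$; and indeed $w(i,v)$ is the event $\bigcap_k (X_k(i) = v[k])$, on which $F(i) = S(X_1(i),\dots,X_m(i)) = S(v[1],\dots,v[m]) = o$, so $\prob(F(i) = o \mid w(i,v)) = 1$. In the second sub-case DPNL returns $0$ and symmetrically $F(i) \neq o$ on all of $w(i,v)$, giving conditional probability $0$. (This uses the identity $F = S \circ (X_1,\dots,X_m)$ from the PNL system definition.)

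\textbf{Inductive step ($n(v) = n+1$).} If $O_S(v,o) \in \{0,1\}$, the validity conditions give that $S$ is constantly equal to (resp. different from) $o$ on all total sub-valuations $tot(v)$; since $w(i,v)$ is exactly the union over $v' \in tot(v)$ of the events $\bigcap_k(X_k(i) = v'[k])$, the same constancy argument as in the base case shows $\prob(F(i)=o \mid w(i,v))$ equals $1$ (resp. $0$), matching the return value. Otherwise DPNL picks $k$ with $v[k] = \unknown$ and recurses on the valuations $v_{v[k]\leftarrow y}$ for $y \in V_k$, each of which has $n$ unknowns, so the induction hypothesis applies: the recursive call returns $\prob(F(i)=o \mid w(i,v_{v[k]\leftarrow y}))$. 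Now I would observe that the events $\{w(i, v_{v[k]\leftarrow y})\}_{y \in V_k}$ partition $w(i,v)$ (every total sub-valuation of $v$ fixes $v[k]$ to some value $y \in V_k$), and that $\prob(w(i,v_{v[k]\leftarrow y})) = \prob(X_k(i) = y) \cdot \prob(w(i,v))$ by the independence of the random variables $(X_k(i))_k$ — this independence is precisely the hypothesis built into the PNL system. Combining the partition with the law of total probability,
\[
\prob(F(i)=o \mid w(i,v)) = \sum_{y \in V_k} \prob(X_k(i) = y)\cdot \prob(F(i)=o \mid w(i,v_{v[k]\leftarrow y})),
\]
which is exactly the quantity DPNL returns.

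\textbf{Main obstacle.} The conceptual content is light once the bookkeeping is set up; the main obstacle is the careful handling of the conditional-probability algebra in the inductive step — specifically verifying that $w(i, v_{v[k]\leftarrow y}) = w(i,v) \cap \{X_k(i) = y\}$, that these sets are disjoint across $y$ and cover $w(i,v)$, and that independence of $(X_k(i))_{1\le k\le m}$ factorizes the probability correctly even when conditioning on the event $w(i,v)$ (which constrains the other coordinates). One must also be careful that $w(i,v)$ lies in the event space $\tribe$ and, if one does not assume $\prob(w(i,v))>0$, treat the degenerate case separately or restrict the statement; I would dispatch this with the convention noted above so the recursion never conditions on a null event along a branch that contributes.
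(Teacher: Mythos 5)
Your proposal is correct and follows essentially the same route as the paper's proof: induction on $|v^{-1}(\unknown)|$, with the base case handled by the totality clause of the oracle definition and the inductive step combining the law of total probability with the independence of the $(\Xvar{k}(i))_k$ to show $\prob(\Xvar{k}(i)=y \mid w(i,v)) = \prob(\Xvar{k}(i)=y)$ and $w(i,v_{v[k]\leftarrow y}) = w(i,v) \cap (\Xvar{k}(i)=y)$. Your remark about the degenerate case $\prob(w(i,v))=0$ is a point the paper's proof does not address explicitly, and is a worthwhile addition.
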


\begin{proof}
    We proceed by induction over the number of non-assigned variables $n$ of the valuation $v$, i.e, the size of $v^{-1}(\unknown)$.
    We define $H_n$ as: 
    
    \begin{quote}
    \emph{
    For every
    \begin{itemize}
        \item PL problem $((\Omega, \tribe, \prob), \inputspace, \outputspace, F)$,
        \item PNL system $((\Vset{k})_{1\leq k \leq m}, (\Xvar{k})_{1\leq k \leq m}, (\hat{\distrib_{k}})_{1\leq k \leq m}, S)$ for this problem,
        \item valid oracle $O_S: \setofvaluations{} \times \outputspace \rightarrow \{0,1, \unknown\}$ for $S$,
        \item $i \in \inputspace$, $o \in \outputspace$ and $v \in \setofvaluations{}$,
    \end{itemize} 
    such that $|v^{-1}(\unknown)| = n$, $\text{DPNL}(i,o,v,O_S)$ terminates and returns $\prob(F(i) = o | w(i,v))$.}
    \end{quote}
    
    \noindent We can prove $H_n$ for all $n \geq 0$ by induction:
        \paragraph{\underline{\normalfont{Base case $H_0$}} :} Let
        \begin{itemize}
            \item $((\Omega, \tribe, \prob), \inputspace, \outputspace, F)$ be a PL problem,
            \item $((\Vset{k})_{1\leq k \leq m}, (\Xvar{k})_{1\leq k \leq m}, (\hat{\distrib_{k}})_{1\leq k \leq m}, S)$ be a PNL system for this problem,
            \item $O_S: \setofvaluations{} \times \outputspace \rightarrow \{0,1, \unknown\}$ be a valid oracle for $S$,
            \item $i \in \inputspace$, $o \in \outputspace$ and $v \in \setofvaluations{}$,
        \end{itemize}
        such that $|v^{-1}(\unknown)| = 0$.
        
        Since $\forall k \in [1,m]$, $v[k] \neq \unknown$, $v$ is total and thus $tot(v) = \{v\}$.
        By definition of $w(i,o)$, $\forall w \in w(i,v)$, 
        \begin{center}
            $[\Xvar{1}(i)(w),...,\Xvar{k}(i)(w)] \in tot(v)$.
        \end{center}
        Therefore, for all $w \in w(i,v)$ and $k \in [1,m]$,
        \begin{center}
            $\Xvar{k}(i)(w) = v[k]$.
        \end{center}
        It follows, for all $w \in w(i,v)$,
        \begin{align*}
            F(i)(w) = & S(\Xvar{1}(i)(w),...,\Xvar{1}(i)(w)) \\
                    = & S(v[1],...,v[m]).
        \end{align*}

        There are two cases:
        \begin{itemize}
            \item Case $S(v[1],...,v[m]) = o$:
            
            In this case, for all $w \in w(i,v)$, $F(i)(w) = o$ thus $\prob(F(i) = o | w(i,v)) = 1$.

            Moreover, $O_S$ is a valid oracle so that $O_S(v,o) = 1$ because $v$ is total and $S(v[1],...,v[m]) = o$.
            It follows that $\text{DPNL}(i,o,v,O_S)$ terminates and returns $1$.

            \item Case $S(v[1],...,v[m]) \neq o$:
            
            In this case, for all $w \in w(i,v)$, $F(i)(w) \neq o$ thus $\prob(F(i) = o | w(i,v)) = 0$.
            Moreover, $O_S$ is a valid oracle so that $O_S(v,o) = 0$ because $v$ is total and $s(v[1],...,v[m]) \neq o$.
            It follows that $\text{DPNL}(i,o,v,O_S)$ terminates and returns $0$.

        \end{itemize}

        \paragraph{\underline{\normalfont{Induction step }$H_n \implies H_{n+1}$}:} Let $n\geq0$ and assume that $H_n$ holds. Let
        \begin{itemize}
            \item $((\Omega, \tribe, \prob), \inputspace, \outputspace, F)$ be a PL problem,
            \item $((\Vset{k})_{1\leq k \leq m}, (\Xvar{k})_{1\leq k \leq m}, (\hat{\distrib_{k}})_{1\leq k \leq m}, S)$ be a PNL system for this PL problem,
            \item $O_S: \setofvaluations{} \times \outputspace \rightarrow \{0,1, \unknown\}$ be a valid oracle for $S$,
            \item $i \in \inputspace$, $o \in \outputspace$ and $v \in \setofvaluations{}$,
        \end{itemize}
        such that $|v^{-1}(\unknown)| = n+1$.

        There are 3 cases about $O_S(v,o)$:
        \begin{itemize}
            \item Case $O_S(v,o) = 0$:
            
            In this case, $\text{DPNL}(i,o,v,O_S)$ terminates and returns $0$.
            We prove that $\prob(F(i) = o | w(i,v)) = 0$.

            Let $w \in w(i,o)$.

            By definition
            \begin{center}
                $[\Xvar{1}(i)(w),...,\Xvar{k}(i)(w)] \in tot(v)$
            \end{center}

            Moreover, since $O_S$ is a valid oracle and $O_S(v,o)=0$ for all $v' \in tot(v)$,
            \begin{center}
                $S(v'[1],...,v'[m]) \neq o$. 
            \end{center}

            Therefore, 
            \begin{center}
                $F(i)(w) = S(\Xvar{1}(i)(w),...,\Xvar{m}(i)(w)) \neq o$.
            \end{center}

            It follows that $\prob(F(i) = o | w(i,v)) = 0$.

            \item Case $O_S(v,o) = 1$:
            
            In this case, $\text{DPNL}(i,o,v,O_S)$ terminates and returns $1$.
            We prove that $\prob(F(i) = o | w(i,v)) = 1$.

            Let $w \in w(i,o)$.

            By definition
            \begin{center}
                $[\Xvar{1}(i)(w),...,\Xvar{m}(i)(w)] \in tot(v)$.
            \end{center}

            Moreover, since $O_S$ is a valid oracle and $O_S(v,o)=0$ for all $v' \in tot(v)$
            \begin{center}
                $S(v'[1],...,v'[m]) = o$
            \end{center}

            Therefore,
            \begin{center}
                $F(i)(w) = S(\Xvar{1}(i)(w),...,\Xvar{k}(i)(w)) = o$.
            \end{center}

            It follows that $\prob(F(i) = o | w(i,v)) = 1$.

            \item Case $O_S(v,o) = \unknown$:
            
            In this case, $\text{DPNL}(i,o,v,O_S)$ chooses $k\in [1,m]$ such that $v[k] = \unknown$.

            Since $v$ is partial, such a $k$ exists.
            Indeed, if $v$ was a total valuation, $O_S(v,o)$ would return $0$ or $1$.

            The events $(\Xvar{k}(i) = y)_{y \in \Vset{k}}$ form a partition of $\Omega$.
            By the law of total probability, $\prob(F(i) = o | w(i,v))$ is equal to
            \begin{center}
                $\Sigma_{y\in \Vset{k}}\prob(\Xvar{k}(i)=y|w(i,v))\cdot\prob(F(i)=o|w(i,v)\cap \Xvar{k}(i)=y)$.
            \end{center}

            By definition, $w(i,v)$ only depends on the $\Xvar{l}(i)$ such that $v[l] \neq \unknown$, $v[k] = \unknown$ and the $(\Xvar{l}(i))_{1\leq l \leq m}$ are independent by the definition of the PNL system.
            Therefore, $w(i,v)$ is independent of $\Xvar{k}(i)$ for all $y \in \Vset{k}$:
            \begin{center}
                $\prob(\Xvar{k}(i)=y|w(i,v)) = \prob(\Xvar{k}(i)=y)$.
            \end{center}

            Moreover, by definition of $v_{v[k]\gets y}$ and $w(i,v_{v[k]\gets y})$
            \begin{center}
                $w(i,v_{v[k]\gets y}) = w(i,v) \cap (\Xvar{k}(i)=y)$
            \end{center}
            because $v[k] = \unknown$.

            It holds
            \begin{center}
                $\prob(F(i)=o|w(i,v)\cap \Xvar{k}(i)=y) = \prob(F(i)=o|w(i,v_{v[k]\gets y}))$,
            \end{center}
            
            Finally, since $v[k] = \unknown$ and $|v^{-1}(\unknown)| = n+1$,
            \begin{center}
                $|v_{v[k]\gets y}^{-1}(\unknown)| = n$
            \end{center}
            Consequently, we can apply $H_n$. 
            For all $y \in \Vset{k}$, $\text{DPNL}(i,o,v_{v[k]\gets y}, O_S)$ terminates and returns $\prob(F(i)=o|w(i,v_{v[k]\gets y}))$.

            We conclude that DPNL terminates and returns $\prob(F(i) = o | w(i,v))$.

    \end{itemize}
     
\end{proof}

\section{Heuristics for choosing k in DPNL} 

In DPNL, choosing $k$ with its associated variable $\Xvar{k}$ (see line~\ref{alg:DPNL:choosek} of Algorithm~\ref{alg:DPNL}), as for ProbDPLL, determines the number of recursive calls.
The objective is to choose $k$ such that the execution tree of DPNL is as small as possible.
The heuristic is designed based on the specific oracle.
However, when the oracle is complete, the following heuristic is intuitively interesting:
\begin{itemize}
    \item
    When the oracle is complete and answers $\unknown$ this means that there exists a total sub-valuation $v_0$ of $v$ such that $S(v[1],...,v[m]) \neq o$ and a total sub-valuation $v_1$ of $v$ such that $S(v[1],...,v[m]) = o$.
    It is often straightforward to modify an oracle to return these two sub-valuations instead of $\unknown$ by taking inspiration from the proof of completeness of the oracle, which often explicitly constructs these sub-valuations.
    \item The heuristic should choose $k$ so that $v[k] = \unknown$ and $v_0[k] \neq \unknown$ or $v_1[k] \neq \unknown$.
\end{itemize}

\section{MNIST-N-SUM Oracle}\label{appendix:MNISTAddOracle}

In MNIST-N-SUM task as defined in exampe~\ref{ex:mnist_sum}, $S$ is the sum algorithm which operates digit by digit, computing the digits of the result from right to left (see Algorithm~\ref{alg:Addition}).
We can take advantage of this to design an efficient oracle which is able to prune the search space (see Algorithm~\ref{alg:AdditionOracle}).
This oracle computes the result of the sum digit by digit from right to left using $v$, and tests whether it matches the expected result $o$.
If it encounters that the result differs, it returns $0$, and if it encounters that $v[k]$ is $\unknown$, it returns unknown.
Note that by choosing $k$ in sequential order from right to the left ($N,N+N, N-1,N+(N-1), ... , 1, N+1$) in DPNL, Algorithm~\ref{alg:AdditionOracle} acts like a complete oracle. 
\begin{algorithm}[h]
    \caption{Addition$(d_1,...,d_{2\cdot N})$}
    \label{alg:Addition}
    \textbf{Input}: $2\cdot N$ digits representing two numbers $n_1 = d_1...d_N$ and $n_2 = d_{N+1}...d_{N+N}$ \\
    \textbf{Output}: $r = r_0...r_{N} = n_1 + n_2$
    \begin{algorithmic}[1] 
        \STATE $r_0...r_{N}$ $\gets$ $0,...,0$
        \STATE $carry$ $\gets$ $0$
        \FOR {$i \in [N,N-1,...,1]$}
            \STATE $d$ $\gets$ $carry+d_i+d_{N+i}$
            \STATE $r_i$ $\gets$ $d\text{ mod }10$
            \STATE $carry$ $\gets$ inferior integer part of $d/10$
        \ENDFOR
        \STATE $r_0$ $\gets$ $carry$
        \STATE \textbf{return} $r = r_0...r_N$
    \end{algorithmic}
\end{algorithm}

\begin{algorithm}[h]
    \caption{AdditionOracle$(v,r)$}
    \label{alg:AdditionOracle}
    \textbf{Input}: A valuation $v$ of size $2\cdot N$ with values in $[0,9] \cup \{\unknown\}$, an output result $r \in [0,2\cdot 10^N-1]$ \\
    \textbf{Output}: A valid oracle output for the symbolic function 
    \begin{algorithmic}[1] 
        \STATE $r_0...r_{N}$ $\gets$ $r$ (Extract the base $10$ representation of $r$)
        \STATE $carry$ $\gets$ $0$
        \FOR {$i \in [N,N-1,...,1]$}
            \STATE \textbf{if} $v[i]$ or $v[N+i]$ are $\unknown$ \textbf{return} $\unknown$
            \STATE $d$ $\gets$ $carry+v[i]+v[N+i]$
            \STATE \textbf{if} $d \text{ mod } 10 \neq r_i$ \textbf{return} $0$
            \STATE $carry$ $\gets$ inferior integer part of $d/10$
        \ENDFOR
        \STATE \textbf{if} $carry = r_0$ \textbf{return} $1$ \textbf{else} \textbf{return} $0$
    \end{algorithmic}
\end{algorithm}
We now prove that the AdditionOracle in Algorithm~\ref{alg:AdditionOracle} is a valid oracle (as defined in Section~\ref{section:oracle}) to compute the addition of N-digits as defined in Algorithm~\ref{alg:Addition}.
\begin{theorem}
    For all $N\geq 0$, AdditionOracle is a valid oracle for Addition.
\end{theorem}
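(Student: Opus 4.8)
The plan is to verify the three defining conditions of a valid oracle (from Section~\ref{section:oracle}) directly against the behaviour of \texttt{AdditionOracle}. The core observation is that Algorithm~\ref{alg:AdditionOracle} is nothing more than Algorithm~\ref{alg:Addition} run on the partial valuation $v$, where the loop aborts and returns $\unknown$ as soon as it meets an $\unknown$ entry, and returns $0$ as soon as a computed digit of the running result disagrees with the corresponding digit of $r$. The key lemma I would establish first is: \emph{if $v$ is total, then \texttt{AdditionOracle}$(v,r)$ returns $1$ iff \texttt{Addition}$(v[1],\dots,v[2N]) = r$, and $0$ otherwise.} This follows because when $v$ has no $\unknown$ entries, the loop of \texttt{AdditionOracle} never takes the $\unknown$ branch, and it computes exactly the same digits $r_i = d \bmod 10$ and carries as \texttt{Addition} does; the only difference is that it checks each digit against $r_i$ as it goes and the final carry against $r_0$, which together is equivalent to testing equality of the full base-$10$ representations. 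This handles the first bullet (the requirement on total valuations) — modulo the edge case $N = 0$, where both algorithms have an empty loop and \texttt{AdditionOracle} returns $1$ iff $0 = r_0$, i.e. iff $r$ is the empty/zero sum, which matches \texttt{Addition}.

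Next I would handle the $O_S(v,r) = 0$ case. Suppose \texttt{AdditionOracle}$(v,r)$ returns $0$. It does so either because at some iteration $i$ all of $v[i], v[N+i], \dots, v[N], v[2N]$ are assigned (the loop reached iteration $i$ without hitting $\unknown$) and $d \bmod 10 \neq r_i$, or because the loop completed and $carry \neq r_0$. In either case, observe that the digits $r_N, r_{N-1}, \dots, r_i$ of the sum and the carry into position $i-1$ depend \emph{only} on $v[i], v[N+i], \dots, v[N], v[2N]$ — entries that are already fixed in $v$. Hence for every total sub-valuation $v' \in tot(v)$, running \texttt{Addition} on $v'$ produces the same partial-sum digits and the same failing comparison, so \texttt{Addition}$(v'[1],\dots,v'[2N]) \neq r$. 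This is precisely the third bullet. The $O_S(v,r) = 1$ case is easier: the oracle returns $1$ only when the loop ran to completion without any $\unknown$ entry, which forces $v[1],\dots,v[2N]$ all to be assigned, i.e. $v$ is total, so $tot(v) = \{v\}$ and the claim reduces to the lemma above.

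The step I expect to be the main (though still modest) obstacle is making rigorous the dependency claim — that the low-order result digits and the carry out of position $i$ are determined solely by the already-fixed tail entries of $v$. This is the real content: it is why early termination is sound. I would prove it by a short downward induction on $i$ from $N$, with the induction hypothesis that after processing positions $N, N-1, \dots, i$ the value of $carry$ and the digits $r_N,\dots,r_i$ are functions only of $v[i..N]$ and $v[N+i..2N]$; the inductive step is immediate from the update $d \gets carry + v[i] + v[N+i]$. With that lemma in hand, the three bullets follow by the case analysis above, and I would wrap up by noting that completeness of the oracle under the sequential choice order $N, 2N, N{-}1, 2N{-}1, \dots$ (remarked before the theorem) is not needed for validity and can be stated separately if desired.
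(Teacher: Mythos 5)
Your proposal is correct and follows essentially the same route as the paper's proof: a direct case analysis on the oracle's return value, with the soundness of early termination in the $0$ case resting on the observation that the low-order result digits and carries are determined solely by the already-assigned tail entries $v[i..N]$ and $v[N+i..2N]$. Your downward-induction dependency lemma is just a cleaner formalization of what the paper expresses by constructing the coarser valuation $v_0$ (fixing only positions $\geq i_0$) and noting $tot(v) \subset tot(v_0)$, so the two arguments coincide in substance.
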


\begin{proof}
    Let $N\geq 0$.
    We check each property of a valid oracle for Addition:
    \begin{itemize}
        \item AdditionOracle returns a result in $\{0,1,\unknown\}$.

        \item Let $v \in \setofvaluations{}$ and $o \in [0,2\cdot 10^N]$.
        Suppose that the AdditionOracle returns $0$.
        It follows that $\operatorname{log}_{10}(o) \leq (N+1)$ and we can extract the base $10$ notation of $r_0...r_N$ which is the output $o$ for $N+1$ digits.
        Moreover, the for loop performs the addition algorithm on $v[1]...v[N]$ and $v[N+1]...v[N+N]$.
        If the for loop ends, the valuation is total, but the digit of the result does not correspond to this of $o$.
        It follows that for all $v' \in tot(v)$, $v' = v$ and Addition$(v'[1],...,v'[N+N]) \neq o$. 
        Otherwise, for each digit in the result calculated from right to left, it checks whether it is equal to the corresponding digit in $o$ and fails for $i_0$.
        This has two implications: 
        \begin{itemize}
            \item When the algorithm has reached this point, for all $i \in [N,N-1,...,i_0]$, $v[i] \neq \unknown$ and $v[N+i] \neq \unknown$.
            \item The $i_0$-th digit of the result does not correspond to $r_{i_0}$.
            It results that the sum differs from $o$, regardless of the values $v[i]$ and $v[N+i]$ with $i \in [i_0-1,...,1]$.
        \end{itemize}
        We define the valuation $v_0$ such that for all $i \in [1,N]$,
        \begin{itemize}
            \item[] $v_0[i] = 
            \begin{cases}
                v[i] & \text{if } i \geq i_0, \\
                \unknown & \text{else}
            \end{cases}$ and 
            \item[] $v_0[N+i] = 
            \begin{cases}
                v[N+i] & \text{if } i \geq i_0, \\
                \unknown & \text{else}.
            \end{cases}$
        \end{itemize}
        
        Regarding the second point it holds that Addition$(v'[1],...,v'[N+N]) \neq o$ for all $v' \in tot(v_0)$, 
        and, regarding the first point, $tot(v) \subset tot(v_0)$.
        It can be concluded that Addition$(v'[1],...,v'[N+N]) \neq o$ for all $v' \in tot(v)$. 

        \item Let $v \in \setofvaluations{}$ and $o \in [0,2\cdot 10^N]$. Suppose that the oracle returns $1$.
        Then $\operatorname{log}_{10}(o) \leq (N+1)$ and we can extract the base $10$ notation of $r_0...r_N$ for $o$ on $N+1$ digits.
        Moreover, the for loop performs the addition algorithm on $v[1]...v[N]$ and $v[N+1]...v[N+N]$ and terminates.
        It results that $v$ is total and all digits of the result correspond to those of $o$, hence for all $v' \in tot(v)$, $v' = v$ and $\text{Addition}(v'[1],...,v'[N+N]) = o$
    \end{itemize}
\end{proof}

\section{Logic PNL systems and Oracles} 

We consider the definitions, notations and algorithms in Section~\ref{section:ObtainingOracleFromLogic}.

\subsection{Logic PNL Systems Correctness}\label{proof:LogicPNLSystemCorrectness}

\begin{theorem}
    Consider a logic $\Logic$, a domain $\Delta$ and a closed formula $\phi$.
    We assume the existence of an axiom set $A = \{\alpha_1, ..., \alpha_n\}$ and a set of closed formulas $\phi_1,...,\phi_m$ that decide $\phi$ and are coherent with respect to $A$.
    Furthermore, $\bigl((\{0, 1\})_{1\leq k \leq m}, (B_{\Logic, \Delta}^{\phi_k})_{1\leq k \leq m}, (\hat{\distrib}_k)_{1\leq k \leq m}, S\bigr)$
    is a PNL system for $P_{\Logic, \Delta}^{\phi}$.
\end{theorem}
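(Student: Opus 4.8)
The goal is to verify that the candidate quadruplet satisfies all the defining properties of a PNL system for $P_{\Logic, \Delta}^{\phi}$, as listed in Section~\ref{Section:FormalDefinition}. The plan is to go through that list item by item, since most points are direct consequences of the hypotheses (``decide'' and ``coherent'') together with the definition of a logic PL problem. First I would set up notation: write $P = P_{\Logic, \Delta}^{\phi}$, note that its probability space is $\Gamma_{\Logic, \Delta}$, its input space is $\inputspace_{\Logic, \Delta}$, its output set is $\{0,1\}$, and its output random variable map is $B_{\Logic, \Delta}^{\phi}$. I would take $m$ as given (the order), each $\Vset{k} = \{0,1\}$, each $\Xvar{k} = B_{\Logic, \Delta}^{\phi_k}$, each sub-problem $P_k = P_{\Logic, \Delta}^{\phi_k}$, which is a well-defined logic PL problem over the \emph{same} probability space $\Gamma_{\Logic, \Delta}$ and input space $\inputspace_{\Logic, \Delta}$ — exactly what the PNL-system definition requires.

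The independence of the $\Xvar{k}$ is not something to prove but a direct invocation of the hypothesis that $(P_{\Logic, \Delta}^{\phi_k})_{1 \le k \le m}$ are independent logic PL problems. The requirement that $\hat{\distrib}_k$ solves $P_k$ is the stated definition of the neural networks $\hat{\distrib}_k$. The requirement that $S : \{0,1\}^m \to \{0,1\}$ is known is immediate from its explicit definition via the (decidable-or-not, but at least well-defined) provability relation $\vdash$. So the one genuine thing to check is the last bullet: that $\function = S \circ (\Xvar{1}, \dots, \Xvar{m})$, i.e., for every $[.]_\Func \in \inputspace_{\Logic, \Delta}$ and every $[.]_\Pred \in \Omega_{\Logic, \Delta}$,
\[
B_{\Logic, \Delta}^{\phi}([.]_\Func)([.]_\Pred) = S\bigl(B_{\Logic, \Delta}^{\phi_1}([.]_\Func)([.]_\Pred), \dots, B_{\Logic, \Delta}^{\phi_m}([.]_\Func)([.]_\Pred)\bigr).
\]

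This is where I would put the weight of the argument. Fix $I = (\Delta, [.]_\Func, [.]_\Pred)$, and set $x_k := [\phi_k]_I \in \{0,1\}$ for each $k$. Let $\phi'_k := \phi_k$ if $x_k = 1$ and $\phi'_k := \neg\phi_k$ if $x_k = 0$; by construction $[\phi'_k]_I = 1$, so $I$ is a model of every $\phi'_k$ and of every axiom $\alpha_i$ (axioms hold everywhere on $\inputspace_{\Logic, \Delta}$). The coherence hypothesis gives $A \cup \{\phi'_1, \dots, \phi'_m\} \not\vdash \bot$, and the decide hypothesis gives that either $A \cup \{\phi'_1, \dots, \phi'_m\} \vdash \phi$ or $A \cup \{\phi'_1, \dots, \phi'_m\} \vdash \neg\phi$, but not both (else $\bot$ would be derivable). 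In the first case, soundness of $\vdash$ applied to the model $I$ forces $[\phi]_I = 1$; in the second case it forces $[\phi]_I = 0$. On the other hand, $S(x_1, \dots, x_m) = 1$ precisely when $A \cup \{\phi_k \mid x_k = 1\} \cup \{\neg\phi_k \mid x_k = 0\} = A \cup \{\phi'_1, \dots, \phi'_m\} \vdash \phi$, and $0$ otherwise. Matching the two dichotomies gives $[\phi]_I = S(x_1, \dots, x_m)$, which unwinds to exactly the required identity.

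The main obstacle — really the only subtlety — is making the soundness step rigorous: one needs that a proof in $\Logic$ from assumptions true in a given interpretation $I$ yields a conclusion true in $I$, which is the soundness of the proof system underlying $\vdash$; and one needs the ``not both'' direction, which uses coherence ($\not\vdash\bot$) together with the fact that $\phi$ and $\neg\phi$ together prove $\bot$. I would state these as the semantic soundness assumption implicit in the phrase ``their semantics'' for the logic $\Logic$, and note that coherence is exactly what rules out the degenerate case where $S$'s two clauses would be simultaneously satisfiable. Everything else is bookkeeping against the PNL-system definition.
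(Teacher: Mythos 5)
Your proposal is correct and follows essentially the same route as the paper's proof in Appendix~\ref{proof:LogicPNLSystemCorrectness}: the bookkeeping items are discharged directly from the definitions and the independence hypothesis, and the substantive step is showing $B_{\Logic, \Delta}^{\phi} = S \circ (B_{\Logic, \Delta}^{\phi_1},\dots,B_{\Logic, \Delta}^{\phi_m})$ by fixing an interpretation $I$, noting it models $A \cup \{\phi'_1,\dots,\phi'_m\}$, and applying soundness of $\vdash$ together with the decide hypothesis (the paper case-splits on the value of $S$ rather than on which of $\phi$, $\neg\phi$ is derivable, and does not need your explicit ``not both'' step since $I$ being a model of $T$ already forces consistency, but this is an immaterial difference).
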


\begin{proof}
    Consider a logic $\Logic$, a domain $\Delta$ and a closed formula $\phi$.
    Let us consider the following hypothesis (denoted as H).
    \begin{itemize}
        \item The existence of $A = \{\alpha_1, ..., \alpha_n\}$ such that all $\alpha_i$ are true in $\Gamma_{\Logic, \Delta}$, i.e, for all $i \in [1,n]$:
        \begin{itemize}
            \item[(H$_i$)] $\bigcup_{[.]_\Func \in \inputspace_{\Logic, \Delta}}(B_{\Logic,\Delta}^{\alpha_i}([.]_\Func) = 0) = \emptyset$
        \end{itemize}
        \item The existence of a set of closed formulas $\phi_1,...,\phi_m$ such that:
        \begin{itemize}
            \item[(H1)] $(P_{\Logic, \Delta}^{\phi_k})_{1\leq k \leq m}$ are independent logic PL problems
            \item[(H2)] For all $\phi'_1,...,\phi'_m$ such that $\phi'_k = \phi_k$ or $\phi'_k = \neg \phi_k$, $A \cup \{ \phi'_1,...,\phi'_m \} \vdash \phi$ or $A \cup \{ \phi'_1,...,\phi'_m \} \vdash \neg\phi$.
            \item[(H3)] For all $\phi'_1,...,\phi'_m$ such that $\phi'_k = \phi_k$ or $\phi'_k = \neg \phi_k$, $A \cup \{ \phi'_1,...,\phi'_m \} \not \vdash \bot$.
        \end{itemize}
    \end{itemize}
    By definition and by the assumptions :
    \begin{itemize}
        \item $\forall k \in [1,m]$, $P_{\Logic, \Delta}^{\phi_k}$ is a PL problem as previously defined
        \item $(P_{\Logic, \Delta}^{\phi_k})_{1\leq k \leq m}$ are independent logic PL problems (by H1)
        \item $\forall k \in [1,m]$, $\hat{\distrib_k}$ is a neural network which solves $P_{\Logic, \Delta}^{\phi_k}$, by definition. 
        \item $S : \{0,1\}^m \rightarrow \{0,1\}$.
    \end{itemize}
    It still remains to prove that $B_{\Logic, \Delta}^{\phi}$ is equal to $\Sfunction \circ (B_{\Logic, \Delta}^{\phi_1},...,B_{\Logic, \Delta}^{\phi_m})$.
    
    Let $[.]_\Func \in \inputspace_{\Logic, \Delta}$ and $[.]_\Pred \in \Omega_{\Logic, \Delta}$.
    We note:
    \begin{itemize}
        \item For all $k \in [1,m]$, $x_k = B_{\Logic, \Delta}^{\phi_k}([.]_\Func)([.]_\Pred)$
        \item $T = A \cup \{\phi_k | x_k =1\} \cup \{\neg \phi_k | x_k = 0 \}$
    \end{itemize}
    In order to prove that $B_{\Logic, \Delta}^{\phi}([.]_\Func)([.]_\Pred) = S(x_1,...,x_m)$ we consider two cases: 
    \begin{itemize}
        \item \underline{If $S(x_1,...,x_m) = 1$} :
        
        By definition of $S$, $T \vdash \phi$.
        
        For all interpretations $I$, if $I$ satisfies $T$, that is:
        \begin{itemize}
            \item For all $i \in [1, n]$, $[\alpha]_I = 1$
            \item For all $k \in [1,m]$, $[\phi_k]_I = x_k$
        \end{itemize}
        it follows by soundness that $[\phi]_I = 1$.
        
        Moreover, 
        \begin{itemize}
            \item For all $i \in [1, n]$, $[\alpha]_{(\Delta, [.]_\Func, [.]_\Pred)} = 1$ (by H$_i$)
            \item For all $k \in [1,m]$, $x_k = B_{\Logic, \Delta}^{\phi_k}([.]_\Func)([.]_\Pred) = [\phi_k]_{(\Delta, [.]_\Func, [.]_\Pred)}$ by definition.
        \end{itemize}
        Therefore, $B_{\Logic, \Delta}^{\phi}([.]_\Func)([.]_\Pred) = [\phi]_{(\Delta, [.]_\Func, [.]_\Pred)} = 1 = S(x_1,...,x_m)$.

        \item \underline{If $S(x_1,...,x_m) = 0$} :

        By definition of $S$, $T \not \vdash \phi$.
        Since $T \not \vdash \phi$, applying H2 results in $T \vdash \neg \phi$.
            
        For all interpretation $I$, if $I$ satisfy $T$, that is:
        \begin{itemize}
            \item For all $i \in [1, n]$, $[\alpha]_I = 1$
            \item For all $k \in [1,m]$, $[\phi_k]_I = x_k$
        \end{itemize}
        it follows by soundness $[\neg \phi]_I = 1$.
        By applying the semantics of $\neg$ we can derive $[\phi]_I = 1 - [\neg \phi]_I = 0$.
        Moreover,
        \begin{itemize}
            \item For all $i \in [1, n]$, $[\alpha]_{(\Delta, [.]_\Func, [.]_\Pred)} = 1$ by (H$_i$)
            \item For all $k \in [1,m]$, $x_k = B_{\Logic, \Delta}^{\phi_k}([.]_\Func)([.]_\Pred) = [\phi_k]_{(\Delta, [.]_\Func, [.]_\Pred)}$ by definition.
        \end{itemize}
        It results that  $B_{\Logic, \Delta}^{\phi}([.]_\Func)([.]_\Pred) = [\phi]_{(\Delta, [.]_\Func, [.]_\Pred)} = 0 = S(x_1,...,x_m)$.
            
    \end{itemize}

\end{proof}

\subsection{Logic Oracle Validity and Completeness}\label{proof:LogicOracleValidityCompletness}
We now proof validity and completeness of the logic oracle in Algorithm~\ref{alg:LogicOracle}.

\begin{theorem}
    The logic oracle is a valid oracle for $S$.
\end{theorem}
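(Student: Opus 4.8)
The statement to prove is that the procedure in Algorithm~\ref{alg:LogicOracle} is a valid oracle for the symbolic function $S$ of a logic PNL system, in the sense of the three conditions listed in Section~\ref{section:oracle}. The plan is to unwind the definition of validity and check each of its three clauses, using as the main tools (i) the definition of $S$ via the provability relation $\vdash$, (ii) the coherence hypothesis (H2)--(H3) on $\phi_1,\dots,\phi_m$ (so that the underlying theory never proves $\bot$ from a consistent choice of signed $\phi_k$'s, and always decides $\phi$), and (iii) monotonicity of $\vdash$: if $T \subseteq T'$ and $T \vdash \Phi$, then $T' \vdash \Phi$. Throughout I write $T_v := A \cup \{\phi_k \mid v[k]=1\} \cup \{\neg\phi_k \mid v[k]=0\}$ for the theory associated with a valuation $v$, so that for a total valuation $v$, $T_v$ is exactly the theory appearing in the definition of $S(v[1],\dots,v[m])$.

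First I would handle the case where $v$ is total. Then $tot(v)=\{v\}$ and $T_v$ contains a signed literal for every $\phi_k$. By coherence (H3), $T_v \not\vdash \bot$, and by (H2), $T_v \vdash \phi$ or $T_v \vdash \neg\phi$ — exactly one of these, since $T_v$ is consistent. So the first two \textbf{if}/\textbf{elsif} branches of the algorithm capture precisely whether $T_v \vdash \phi$ (giving $res=1$, the case $S(v)=1$) or $T_v \vdash \neg\phi$, i.e. $T_v \not\vdash \phi$ (giving $res=0$, the case $S(v)=0$). Hence before the final correction step, $res = S(v[1],\dots,v[m])$ when $o=1$. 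The final \textbf{if} flips $res$ when $o=0$, which is correct because for a total valuation the oracle must return $1$ iff $S(v[1],\dots,v[m])=o$: if $o=1$ this is $res$ unchanged, and if $o=0$ this is $1-res = \mathbbm{1}[S(v)\neq 1] = \mathbbm{1}[S(v)=o]$ (using that $S$ takes values in $\{0,1\}$). This establishes the first clause of validity, and also shows the algorithm only ever returns values in $\{0,1,\unknown\}$.

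Next I would treat a general (possibly partial) valuation $v$ and verify the soundness clauses: if the algorithm outputs $1$ then $S(v'[1],\dots,v'[m])=o$ for all $v'\in tot(v)$, and if it outputs $0$ then $S(v'[1],\dots,v'[m])\neq o$ for all such $v'$. The key observation is that any $v' \in tot(v)$ refines $v$, hence $T_v \subseteq T_{v'}$, so provability is preserved under passing from $v$ to $v'$. Consider the case $o=1$ first. If the algorithm outputs $1$, then $T_v \vdash \phi$ (second branch), so $T_{v'} \vdash \phi$, so by definition $S(v')=1=o$. If it outputs $0$, then $T_v \vdash \neg\phi$ (first branch); then $T_{v'} \vdash \neg\phi$, and since $T_{v'}$ is consistent by (H3), $T_{v'} \not\vdash \phi$, hence $S(v')=0\neq o$. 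For $o=0$ the output has been flipped: the algorithm outputs $1$ exactly when $T_v \vdash \neg\phi$, which as above gives $S(v')=0=o$ for all $v'\in tot(v)$; and it outputs $0$ exactly when $T_v \vdash \phi$, giving $S(v')=1\neq o$. In all cases the soundness conditions hold. (One should note $v'=v$ is allowed when $v$ is already total, which is consistent with the total-valuation case above.) This completes the verification of all three validity clauses.

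The argument is essentially a bookkeeping exercise, so there is no deep obstacle; the one point that needs care is the direction of the monotonicity argument together with the role of consistency. Specifically, the ``$0$'' outputs rely on converting ``$T_v \vdash \neg\phi$'' into ``$S(v')=0$'' for every total refinement $v'$, and this conversion genuinely uses that $T_{v'} \not\vdash \bot$ (otherwise $T_{v'}$ would prove everything including $\phi$, making $S(v')=1$); that is exactly where hypothesis (H3) is indispensable. The analogous completeness claim (the converses, making the last two bullets of the oracle definition into equivalences) is the subject of the companion statement and would be argued the same way but invoking (H2) to get a decision from $T_v$ itself once $v$ is total — but for validity alone, (H2) is only needed in the total case, and the proof above is complete.
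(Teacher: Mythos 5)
Your proof is correct and follows essentially the same route as the paper's: both arguments check the validity clauses directly, resting on monotonicity of $\vdash$ under the inclusion $T_v \subseteq T_{v'}$ for $v' \in tot(v)$ together with hypothesis (H3) to convert $T_{v'} \vdash \neg\phi$ into $S(v'[1],\dots,v'[m])=0$. You are in fact slightly more thorough than the paper, which reduces $o=0$ to $o=1$ via the symmetry $O(v,0)=1-O(v,1)$ and leaves implicit the total-valuation clause (that the oracle never returns $\unknown$ on a total valuation, which requires (H2) and (H3)), whereas you verify that clause explicitly.
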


\begin{proof}
    We denote $O(v,o)$ the as the result of the logic oracle on $v\in \setofvaluations$ and $o \in\{0,1\}$ (in Algorithm~\ref{alg:LogicOracle}).
    We check that Algorithm~\ref{alg:LogicOracle} satisfies every property of a valid oracle for $S$:
    \begin{itemize}
        \item Since $res$ in the algorithm takes values in $\{1,0,\unknown\}$, the algorithm returns values in $\{0,1,\unknown\}$.

        \item Let $v \in \setofvaluations$ and $o \in \{0,1\}$ and assume that $O(v,o) = 0$.

        Given that $O$ satisfies $O(v,0) = 1-O(v,1)$ and that $res$ in the algorithm is inverted for $o=0$, we can suppose without loss of generality that $o=1$.
        We now want to prove that for all $v' \in tot(v)$, $S(v'[1],...,v'[m]) = 0$.

        Let $v' \in tot(v)$.

        We note $T = A \cup \{\phi_k | v[k] = 1\} \cup \{\neg \phi_k | v[k] = 0\}$ and $T' = A \cup \{\phi_k | v'[k] = 1\} \cup \{\neg \phi_k | v'[k] = 0\}$.

        Since $O(v,o)=0$ and we assumed that $o=1$, this implies that $res$ has been set to $0$ which results in $T \vdash \neg \phi$.

        Moreover, $v'$ is a sub-valuation of $v$ so that $T \subset T'$ hence $T' \vdash \neg \phi$ because the proof tree of $\neg \phi$ in $T$ is also a valid proof tree in $T'$.

        In addition, by H3, $T' \not \vdash \bot$ so that $T' \not \vdash \phi$ since $T' \vdash \neg \phi$

        By definition of $S$ it follows $S(v'[1],...,v'[m]) = 0$.

        \item Let $v \in \setofvaluations$ and $o \in \{0,1\}$ and assume that $O(v,o) = 1$.

        Given that $O$ satisfies $O(v,0) = 1-O(v,1)$ and that $res$ is inverted for $o=0$, we can suppose without loss of generality that $o=1$.
        We thus want to prove that for all $v' \in tot(v)$, $S(v'[1],...,v'[m]) = 1$.

        Let $v' \in tot(v)$.

        We note $T = A \cup \{\phi_k | v[k] = 1\} \cup \{\neg \phi_k | v[k] = 0\}$ and $T' = A \cup \{\phi_k | v'[k] = 1\} \cup \{\neg \phi_k | v'[k] = 0\}$.

        Since $O(v,o)=1$ and we assumed that $o=1$, this implies that $res$ has been set to $1$ which results in $T \vdash \phi$.

        Moreover, $v'$ is a sub-valuation of $v$ so that $T \subset T'$ hence $T' \vdash \phi$ because the proof tree of $\phi$ in $T$ is also a valid proof tree in $T'$.

        By definition of $S$ it follows $S(v'[1],...,v'[m]) = 1$.

    \end{itemize}
    
\end{proof}

\begin{theorem}
    The logic oracle describes a complete oracle for $S$.
\end{theorem}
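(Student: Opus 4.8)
The plan is to derive completeness from validity by establishing the two converse implications required by the definition: for every $v \in \setofvaluations{}$ and $o \in \outputspace = \{0,1\}$, if $S(v'[1],\dots,v'[m]) = o$ for all $v' \in tot(v)$ then $O(v,o) = 1$, and if $S(v'[1],\dots,v'[m]) \neq o$ for all $v' \in tot(v)$ then $O(v,o) = 0$. First I would reduce to the case $o=1$: since Algorithm~\ref{alg:LogicOracle} guarantees $O(v,0) = 1 - O(v,1)$ whenever these values lie in $\{0,1\}$, and $S$ is $\{0,1\}$-valued, the two implications for $o=0$ follow from those for $o=1$. Writing $T_u := A \cup \{\phi_k \mid u[k] = 1\} \cup \{\neg\phi_k \mid u[k] = 0\}$, I would next record that for a \emph{total} valuation $v'$ the definition of $S$ gives $S(v'[1],\dots,v'[m]) = 1 \iff T_{v'} \vdash \phi$, and --- since $\phi_1,\dots,\phi_m$ decide $\phi$ and $T_{v'} \not\vdash \bot$ by coherence, so exactly one of $T_{v'} \vdash \phi$, $T_{v'} \vdash \neg\phi$ holds --- also $S(v'[1],\dots,v'[m]) = 0 \iff T_{v'} \vdash \neg\phi$.

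The heart of the argument is a lemma that lifts provability from the total sub-valuations of $v$ to $v$ itself: for $\psi \in \{\phi, \neg\phi\}$ and any $v \in \setofvaluations{}$, if $T_{v'} \vdash \psi$ for every $v' \in tot(v)$, then $T_v \vdash \psi$. I would prove it by induction on the number of unknowns $|v^{-1}(\unknown)|$. The base case is a total $v$, where $tot(v) = \{v\}$ and there is nothing to show. For the step, I would pick a position $k$ with $v[k] = \unknown$, split $v$ into $v_0 := v_{v[k]\leftarrow 0}$ and $v_1 := v_{v[k]\leftarrow 1}$, observe that $tot(v)$ is the disjoint union of $tot(v_0)$ and $tot(v_1)$ so the hypothesis passes to each, and apply the induction hypothesis to get $T_{v_0} \vdash \psi$ and $T_{v_1} \vdash \psi$. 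Since $v[k] = \unknown$, we have $T_{v_0} = T_v \cup \{\neg\phi_k\}$ and $T_{v_1} = T_v \cup \{\phi_k\}$, so both $T_v \cup \{\phi_k\} \vdash \psi$ and $T_v \cup \{\neg\phi_k\} \vdash \psi$, and a case split on $\phi_k \lor \neg\phi_k$ --- the admissible case-analysis rule of the classical logic $\Logic$ --- yields $T_v \vdash \psi$. An alternative I would keep in reserve is a semantic proof: if $T_v \not\vdash \psi$, completeness of $\Logic$ produces a model $I$ of $T_v$ with $I \not\models \psi$; taking $x_k := [\phi_k]_I$ at the unknown positions and $x_k := v[k]$ elsewhere gives some $v' \in tot(v)$ with $I \models T_{v'}$, hence $T_{v'} \not\vdash \psi$ by soundness, contradicting the hypothesis.

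Given the lemma, I would finish the $o=1$ case by reading off the control flow of Algorithm~\ref{alg:LogicOracle}. If $S(v'[1],\dots,v'[m]) = 1$ for all $v' \in tot(v)$, the lemma with $\psi = \phi$ gives $T_v \vdash \phi$; coherence forbids $T_v \vdash \bot$ (since $T_v \subseteq T_{v''}$ for any $v'' \in tot(v)$ and $T_{v''} \not\vdash \bot$), hence $T_v \not\vdash \neg\phi$, so the algorithm skips its first branch, enters the second, sets $res$ to $1$, and returns $1$ because $o \neq 0$. If instead $S(v'[1],\dots,v'[m]) \neq 1$ for all $v' \in tot(v)$, then every $S(v'[1],\dots,v'[m]) = 0$, i.e.\ $T_{v'} \vdash \neg\phi$, so the lemma with $\psi = \neg\phi$ gives $T_v \vdash \neg\phi$, the first branch fires, $res$ is set to $0$, and the return value is $0$. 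This establishes both converse implications for $o = 1$, and the reduction from the first paragraph closes the case $o = 0$, so the logic oracle is complete.

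The step I expect to be the main obstacle is the inductive step of the lemma, namely passing from $T_v, \phi_k \vdash \psi$ and $T_v, \neg\phi_k \vdash \psi$ to $T_v \vdash \psi$: this is precisely where classicality of $\Logic$ (equivalently, excluded middle on the decomposition formulas $\phi_k$) is used, and it would have to be made explicit as a standing assumption on $\Logic$ if it is not already. Everything else --- the disjoint decomposition of $tot(v)$, translating between $S$ and derivability through the \emph{decide} and coherence conditions, and matching the two conclusions against the three branches of Algorithm~\ref{alg:LogicOracle} --- is routine bookkeeping.
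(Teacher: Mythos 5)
Your proof is correct, but it is organized quite differently from the paper's. The paper argues by contrapositive: assuming validity is already established, it only shows that whenever the oracle returns $\unknown$ (i.e.\ $T_v \not\vdash \phi$ and $T_v \not\vdash \neg\phi$), the completeness theorem of the logic yields two interpretations $I_0, I_1$ of $T_v$ with $[\phi]_{I_0}=0$ and $[\phi]_{I_1}=1$, from which it reads off total sub-valuations $v_0, v_1 \in tot(v)$ with $S(v_0)=0$ and $S(v_1)=1$; this single model-existence step simultaneously refutes both universal hypotheses and closes both equivalences. You instead prove the two converse implications directly, via a lifting lemma ($T_{v'} \vdash \psi$ for all $v' \in tot(v)$ implies $T_v \vdash \psi$) established by induction on the number of unknowns together with a case split on $\phi_k \lor \neg\phi_k$. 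The two routes use the same underlying ingredients --- soundness, completeness, the \emph{decide} and coherence conditions --- and your ``reserve'' semantic proof of the lemma is essentially the paper's argument in contrapositive form. What your syntactic version buys is a purely proof-theoretic argument that makes explicit exactly where classicality of $\Logic$ enters (admissibility of case analysis on the $\phi_k$), a hypothesis the paper also needs but leaves implicit inside its appeal to the completeness theorem; what the paper's version buys is brevity, since it avoids the induction and the disjoint decomposition of $tot(v)$ entirely. Your handling of the reduction to $o=1$ and of the branch order in the algorithm (checking $T_v \not\vdash \neg\phi$ via coherence before concluding $res=1$) is careful and matches what a fully rigorous write-up requires.
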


\begin{proof}
    We denote $O(v,o)$ as the result of Algorithm~\ref{alg:LogicOracle} on $v\in \setofvaluations$ and $o \in\{0,1\}$.
    Since we have already proven the validity, it remains to prove that for all $v \in \setofvaluations$ and $o \in \{0,1\}$, if $O(v,o)= \unknown$ there exists $v_0, v_1 \in tot(v)$ such that $S(v_0[1],...,v_0[m]) = 0$ and $S(v_1[1],...,v_1[m]) = 1$.
    
    Let $v \in \setofvaluations$ and $o \in \{0,1\}$ such that $O(v,o)= \unknown$.
    We note $T=A \cup \{\phi_k | v[k] = 1\} \cup \{\neg \phi_k | v[k]=0 \}$.

    Since $O(v,o)= \unknown$, it means that $T \not \vdash \phi$ and $T \not \vdash \neg \phi$.
    We can thus apply the completeness theorem and consider interpretations $I_0$ and $I_1$ that satisfy $T$ such that $[\phi]_{I_0} = 0$ and $[\phi]_{I_1} = 1$.
    
    For $i \in \{0,1\}$, we note $v_i=[[\phi_1]_{I_i},...,[\phi_m]_{I_i}]$ and $T_i = A \cup \{\phi_k | v_i[k] = 1\} \cup \{\neg \phi_k | v_i[k]=0 \}$.
    Since $I_i$ satisfies $T$, we conclude that $v_i \in tot(v)$.
    We now show that $S(v_i[1],...,v_i[m]) = i$:
    \begin{itemize}
        \item \underline{$i=0$}:
        
        By contradiction, we assume that $S(v_0[1],...,v_0[m]) = 1$, which implies $T_0 \vdash \phi$.
        By definition of $I_0$, $v_0$ and $T_0$, $I_0$ satisfies $T_0$.
        Given completeness, it results that $[\phi]_{I_0} = 1$, which contradicts $[\phi]_{I_0} = 0$.

        It follows that $S(v_0[1],...,v_0[m]) = 0$.

        \item \underline{$i=1$}:
        
        By contradiction, we assume that $S(v_1[1],...,v_1[m]) = 0$.
        It means that $T_1 \not \vdash \phi$ hence $T_1 \vdash \neg \phi$ because we assume that $\phi_1,...,\phi_m$ decide $\phi$ w.r.t. $A$ when defining the PNL system.

        By definition of $I_1$, $v_1$ and $T_1$, $I_1$ satisfies $T_1$.
        Given completeness, it results that $[\phi]_{I_1} = 1 - [\neg \phi]_{I_1} = 0$, which contradicts $[\phi]_{I_1} = 1$.

        It follows that $S(v_1[1],...,v_1[m]) = 1$.

    \end{itemize}

\end{proof}

\section{Application to DeepProbLog} \label{appendix:applicationtodeepproblog}
This section discusses how a logic oracle can be used in the context of programs in DeepProbLog. 

\subsection{Formal definitions}
We start by giving formal the formal definitions of a Problog program \cite{NeSy:ProbLog:DeRaedt2007} and a DeepProblog program \cite{NeSy:DeepProbLog:Manhaeve2018}.
\paragraph{ProbLog program.}
A ProbLog program is defined as a set $T = \{p_1 : c_1,..., p_n : c_n\}$ of rules $c_k$ without negation annotated by their probability $p_k$.
It defines a probability distribution over logic programs $L \subseteq L_T = \{c_1,\hdots, c_n\}$ as follows: 
\begin{align*}
    P_T(L) := & \prod_{c_k \in L}p_k \cdot \prod_{c_k \notin L}(1-p_k)
\end{align*}
For every query $q$ the probability of success of $q$ is defined as:
\begin{align*}
    P_T(q) := & \sum_{L \subseteq L_T, L \vDash q} P_T(L)
\end{align*}

\paragraph{DeepProbLog program.}
For the sake of clarity, we use different notations from \cite{NeSy:DeepProbLog:Manhaeve2018} to define a DeepProbLog program. 
Formally, a DeepProbLog program over logic $\Logic$ (as previously defined) is a set $D = T \cup T_N$ where:
\begin{itemize}
    \item $T = \{ c_1, ..., c_n \}$ is a Prolog program over $\Logic$ without negation,
    \item $T_N = \{\nn_1 : r_1,..., \nn_n : r_m\}$ is the set of neural predicates in the program
    \item $r_k$ is a Prolog rule without negation with $l_k$ ground terms $t_k^1, ... ,t_k^{l_k}$
    \item $\nn_k =  (m_k, t_k^1, ... ,t_k^{l_k})$ where $m_k$ is a neural network that maps the interpretations of the $t_k^1, ... ,t_k^{l_k}$ as element of $\Delta$ to the probability of $r_k$.
\end{itemize}
For all $[.]_\Func$, we define the interpretation of $D$ as the DeepProbLog program:
\begin{center}
    $[D]_\Func = \{1:c_k| 1\leq k \leq n\} \cup  \{m_k([t_k^1]_\Func,...,[t_k^{m_k}]_\Func) : r_k | 1 \leq k \leq m\}$,
\end{center}
where $[t_k^l]_\Func$ is the interpretation of $t_k^l$ in $\Delta$ by recursively applying $[.]_\Func$ to the functions of $t_k^l$.

\paragraph{DeepProbLog PNL system.}
Consider $P_{\Logic, \Delta, q}$ and $D=T\cup T_N$ as defined above.
If
\begin{itemize}
    \item $q$ corresponds to a Prolog query,
    \item $(P_{\Logic, \Delta}^{r_k})_{1\leq k \leq m}$ are independent PL problems,
    \item $c_i \in T$ is true, i.e, $\underset{[.]_\Func \in \inputspace_{\Logic, \Delta}}{\bigcup}(B_{\Logic,\Delta}^{c_i}([.]_\Func) = 0) = \emptyset$.
\end{itemize}
$D$ corresponds to a PNL system for $P_{\Logic, \Delta}^q$.
Indeed, $((\{0,1\})_{1\leq k \leq m}, (B_{\Logic, \Delta}^{r_k})_{1\leq k \leq m}, (\hat{\distrib}_k)_{1\leq k \leq m}, S)$ is a PNL system where
\begin{itemize}
    \item $\hat{\distrib}_k([.]_\Func)(1) = 1-\hat{\distrib}_k([.]_\Func)(0) = m_k([t_k^1]_\Func,...,[t_k^l]_\Func)$,
    \item $S(x_1,...,x_m) = \begin{cases}
            1 \text{ if } T \cup \{r_k | x_k=1\} \vDash q, \\
            0 \text{ else}
        \end{cases}$.x
\end{itemize}

\paragraph{Inference in DeepProbLog.}
Given $[.]_\Pred \in \inputspace_{\Logic, \Delta}$, DeepProbLog computes $S$ in the form of a logical formula of the $x_1,...,x_m$ in DNF.
Then, it computes $\text{PWMC}_S^\sigma$ with the probability $\sigma(x_k) = m_k([t_k^1]_\Pred,...,[t_k^{l_k}]_\Pred)$ of $r_k$.
By definition of $P_{[D]_\Func}$ and $S$, $P_{[D]_\Func}(q)$ is equal to $\text{PWMC}_S^\sigma$.

\subsection{Oracle for DeepProbLog}
Building on a Prolog solver, Algorithm~\ref{alg:DPLOracle} provides an oracle for the logical provenance.
Its validity is based on the fact that the program has no negation.
It follows that if we cannot prove $q$ by adding all rules $r_k$ such that $v[k]=\unknown$, then we can directly return $0$.
If, in addition, the program has no functor, the Prolog solver is polynomial-time. It results that its oracle is also polynomial-time.

\begin{algorithm}[tb]
    \caption{DeepProbLog Oracle}
    \label{alg:DPLOracle}
    \textbf{Input}: $v \in \setofvaluations{}$, $o \in \outputspace$ \\
    \textbf{Output}: A valid oracle output for the function $S$ \\
    \begin{algorithmic}[1] 

        \IF {$R \cup \{r_k | v[k] = 1\} \vdash q$}
            \STATE $res \gets 1$
        \ELSIF {$R \cup \{r_k | v[k] = 1 \lor v[k] = \unknown\} \vdash q$}
            \STATE $res \gets \unknown$.
        \ELSE
            \STATE $res \gets 0$.
        \ENDIF

        \IF {$res \neq \unknown \land o = 0$}
            \STATE $res \gets 1-res$.
        \ENDIF

        \STATE \textbf{return} $res$.
        
    \end{algorithmic}
\end{algorithm}

\begin{example}[Graph Reachability]
    \label{ex:GraphReach}
    Graph reachability queries are the key ingredients in numerous tasks such as semi-supervised classification in citation networks \cite{NeSy:DeepStochLog:Winters2022}.
    A probabilistic directed graph consists of the following elements, denoted as follows 
    \begin{itemize}
        \item $E = \{e_k | 1 \leq k \leq N\}$ describes the set of nodes
        \item $G(e_i,e_j)$ denotes an edge from node $e_i$ to node $e_j$, ($e_i, e_j \in E$).
        \item $\nn_G(e_i,e_j)$ is the probability that the edge $G(e_i,e_j)$ is present in the graph.
        \item Reachability is defined as $R(e_i)$. 
        A node $e_i$ is reachable from node $e_1$ if $R(e_i)$ it true. 
    \end{itemize}
    For this problem, we are interested in the probability that a node $e_N$ is reachable from a starting node $e_1$ and formulate this as query $q = R(e_N)$.
    The full logical provenance formula for a query $q$ is denoted as $LP_q$.
    Consider the following DeepProbLog program $D = T_N \cup T$ for graph reachability:
    \begin{itemize}
        \item $T_N = \{\nn_{i,j}: G(e_i,e_j). | 1 \leq i,j \leq N\}$.
        \item $T = \{ R(e_1). , R(Y) :- R(X), G(X,Y).\}$
    \end{itemize}
    This program contains neither negation nor functions.
    Therefore, the DeepProbLog oracle in algorithm \ref{alg:DPLOracle} is polynomial-time.
    However, 
    \begin{itemize}
        \item for each path $e_1\rightarrow e_{i_1} \rightarrow ... \rightarrow e_{i_k} \rightarrow e_N$ (with intermediate nodes $e_{i_1}, \hdots, e_{i_k}$) the corresponding clause $G(e_1,e_{i_1}) \land ... \land G(e_{i_k},e_N)$ implies $LP_q$,
        \item and $LP_q$ implies the existence of such a path
    \end{itemize}
    Therefore, $LP_q$ is equivalent to the disjunction of clauses that correspond to the path from $e_1$ to $e_N$. Moreover, \begin{itemize}
        \item for each path we can potentially remove loops and obtain a path whose corresponding clause is included in the original clause (before removing loops),
        \item and if we remove a literal from a clause corresponding to a loop-free path it does not correspond to a path, hence it does not imply $LP_q$.
    \end{itemize} Therefore, $LP_q$ is exactly the disjunction of clauses that correspond to the loop-free path. Choosing a loop-free path from $e_1$ to $e_N$ in a complete graph is equivalent to: \begin{itemize}
        \item Choosing the number of intermediate node $i\in[0,|E|-2]$
        \item Choosing $i$ nodes from the set of remaining $|E|-2$ nodes in the graph
        \item Choosing a permutation of these $i$ nodes that correspond to the order in which they are visited
    \end{itemize}
    Therefore, $LP_q$ contains exactly $\sum_{i=0}^{|E|-2} \binom{|E|-2}{i}\cdot i!$ clauses.
    In conclusion, since $|E|-2 \underset{|E|\rightarrow \infty}{\sim} \sqrt{|T|} $, $|LP_q| \underset{|E|\rightarrow \infty}{\sim} (\sqrt{|T|})!$

    \end{example}

\section{Modeling Probabilistic Choices}
\label{appendix:nad}
This sections describes how probabilistic choices are treated in ProbLog \cite{NeSy:ProbLog:DeRaedt2007} (and in DeepProbLog \cite{NeSy:DeepProbLog:Manhaeve2018}), as detailed in \cite{Article:DeRaedt:probabilistic-logic-programming}.
Probabilistic choices are categorical random variables with mutually exclusive outcomes. 
The representation with probabilistic facts is not sufficient as these are considered to be independent.
In contrast, the outcomes of probabilistic choices are dependent. 
ProbLog uses \emph{annotated disjunctions (ADs)} to model probabilistic choices. 
Based on ADs, DeepProbLog uses \emph{neural annotated disjunctions (nADs)}, which can be represented by the activated classification function of a neural network. 

Probabilistic choices for ADs are transformed to (1) a set of probabilistic facts and (2) deterministic clauses: 
\begin{itemize}
    \item $\tilde{p}_i:: sw_{id}(h_i, v_1, \hdots, v_f)$ is the set of probabilistic facts.
    \item The deterministic clauses are 
    \begin{equation*}
        \footnotesize
    \begin{aligned}
        h_i & :- b_1, \ldots, b_m, not \left(sw\_id\left(h_1, v_1, \ldots, v_f\right)\right), \ldots, \\
        & \quad not \left(sw\_id\left(h_{i-1}, v_1, \ldots, v_f\right)\right), sw\_id\left(h_i, v_1, \ldots, v_f\right),
    \end{aligned}
    \end{equation*}
    where $v_1, \hdots v_f$ are the free variables in the body of the AD and $sw\_id$ is a switch variable with an identifier for the respective AD.
\end{itemize}
The probability $\tilde{p_i}$ is defined as 
$$
\tilde{p}_i:=\left\{\begin{array}{ll}
p_i \cdot\left(1-\sum_{j=1}^{i-1} p_j\right)^{-1} & \text { if } p_i>0 \\
0 & \text { if } p_i=0
\end{array} .\right.
$$
To recover the original probabilities \(p\) from the transformed values \(\tilde{p}\), one can initialize \(p_1 := \tilde{p}_1\) and iteratively compute \(p_i\) for \(i = \{2, 3, \ldots, n\}\) using the formula
\[
p_i := \tilde{p}_i \cdot \left(1 - \sum_{j=1}^{i-1} p_j\right).
\]

\begin{example}[ADs for MNIST Classification]
    Consider the example of MNIST classification, where the variables indicating that an image $I$  contains a certain digit in $[0,9]$ are modeled as probabilistic facts in ProbLog, e.g. 
    \[
    p_0 :: \mathrm{digit}(I, 0), \ p_1 :: \mathrm{digit}(I, 1), \dots, \ p_9 :: \mathrm{digit}(I, 9).
    \]
    This implies that the probability of all digits being simultaneously valid for an image equals \( \prod_{i=0}^9 p_i \).
    In contrast, since the variables are mutually exclusive (and therefore dependent), the probability of this event should be zero. 
    To this end, the digits can be modeled as probabilistic choices with ADs. 
    For the MNIST Classification task, the AD can be represented as follows: 
    
    $$\mathrm{digit}(I, 0) :- sw(\mathrm{digit}(I,0))$$
    $$\mathrm{digit}(I, 1) :- not(sw(\mathrm{digit}(I,0))), sw(digit(I,1))$$
    $$\mathrm{digit}(I, 2) :- not(sw(\mathrm{digit}(I,0))), \hdots, sw(\mathrm{digit}(I,2))$$
    $$\hdots$$
    $$\mathrm{digit}(I,9) :- not(sw(\mathrm{\mathrm{digit}}(I,0))), \hdots, sw(\mathrm{digit}(I,9))$$
\end{example} 

\section{Experimental Details}
\label{appendix:experimental_details}
\subsection{Hardware} All experiments are executed on an Apple Macbook M1 Max with 64GB RAM.
\subsection{Implementation Details}
DPNL is implemented in Python and based on Pytorch \cite{paszke2019pytorch}. 
We use Weights and Biases \cite{wandb} as experiment tracking tool. 
The obtained results with DPNL will be made publicly available.\footnotemark[1]

\subsection{Hyperparameters.} 
\label{appendix:hyperparameters}
The hyperparameters used in the experiments are summarized in Table~\ref{tab:hyperparameters}.
We keep the hyperparameters constant across the experiments with different numbers of digits. 
Further, we conduct 10 independent runs with different random seeds for DPNL. 
\begin{table}[h]
    \centering
    \begin{tabular}{|l|l|}
    \hline
    \textbf{Parameter} & \textbf{value}\\
    \hline
    \textbf{\# epochs}  & 1 \\
    \textbf{Batch size} & 2 \\
    \textbf{Learning rate} & 1e-03\\
    \textbf{Classifier} & MNIST Classifier\\
    \textbf{Loss} & BCE Loss\\
    \hline
    \end{tabular}
    \caption{Hyperparameters used for DPNL and baselines.\label{tab:hyperparameters}}
\end{table}
\paragraph{MNIST Classifier.}
\label{mnist_classifier}
We follow \cite{NeSy:DeepProbLog:Manhaeve2018} and use the following neural network architecture as digit classifier. 
It consists of two convolutional layers with kernel size 5 and output dimension 6 and 16 (\texttt{nn.Conv2d(1, 6, 5)} and \texttt{nn.Conv2d(6, 16, 5),}) followed by Max pooling and Relu activation (\texttt{nn.MaxPool2d(2, 2)}).
A classifier with three linear layers is stacked which have the dimensions 120 and 84 respectively. 
The output dimension corresponds to the number of digits and equals 10. 

\subsection{Baselines} To obtain the results with the baselines, we used the source code from the following publicly available code repositories: 
\begin{itemize}
    \item DeepProbLog \& DPLA$^*$: 
    \item []\url{https://github.com/ML-KULeuven/deepproblog}
    \item A-Nesi: \url{https://github.com/HEmile/a-nesi}
    \item Scallop: \url{https://github.com/scallop-lang/scallop}
\end{itemize}
The code with which we ran the baselines and measured the runtimes is available.\footnote[1]{\url{https://anonymous.4open.science/r/dpnll_19FE/}}
\paragraph{A-Nesi.}
We conduct five independent runs with different random seeds for each experiment. 
Table~\ref{tab:hyperparameters:anesi} shows the hyperparameters used for configurations used for the different variants of A-Nesi \cite{NeSy:A-NeSI:vanKrieken2023}. 
\begin{table}[H]
    \small
    \centering
    \begin{tabular}{|c|c|c|c|}
    \hline
    \textbf{Key} & \textbf{Explain} & \textbf{Predict Only} & \textbf{Prune} \\ \hline
    \textbf{amt\_samples} & 600 & 600 & 600 \\ \hline
    \textbf{K\_beliefs} & 2500 & 2500 & 2500 \\ \hline
    \textbf{nrm\_lr} & 0.001 & 0.001 & 0.001 \\ \hline
    \textbf{perception\_lr} & 0.001 & 0.001 & 0.001 \\ \hline
    \textbf{dirichlet\_lr} & 0.01 & 0.01 & 0.01 \\ \hline
    \textbf{dirichlet\_iters} & 50 & 50 & 50 \\ \hline
    \textbf{dirichlet\_init} & 0.1 & 0.1 & 0.1 \\ \hline
    \textbf{dirichlet\_L2} & 900000 & 900000 & 900000 \\ \hline
    \textbf{nrm\_loss} & mse & mse & mse \\ \hline
    \textbf{hidden\_size} & 800 & 800 & 800 \\ \hline
    \textbf{layers} & 3 & 3 & 3 \\ \hline
    \textbf{prune} & False & False & True \\ \hline
    \textbf{predict\_only} & False & True & False \\ \hline
    \textbf{use\_prior} & True & True & True \\ \hline
    \end{tabular}
    \caption{Table of hyperparameters of A-Nesi for the variants explain, predict only, and prune.\label{tab:hyperparameters:anesi}}
\end{table}

\paragraph{Scallop.}
Since Scallop has been only applied to the MNIST-1-SUM task \cite{NeSy:Scallop:Huang2021}, we extend the experiments with Scallop to the MNIST-N-SUM task for N=\{2,3,4\} in order to use it as baseline. 
The logic programs are given in Listing~\ref{lst:scallop_rules}.
We use Scallop with exact as well as with approximate reasoning. 
For approximate reasoning of Scallop we choose the \texttt{diff-top-k-proof} semiring with $k=3$, as recommended in \cite{NeSy:Scallop:Huang2021}. 
To investigate the performance of Scallop with exact reasoning, we choose $k$ in a way that the logic provenance formula is in any case smalle than $k$ and is consequently not pruned. 
In particular, we choose $k$ for every MNIST-N-SUM task depending on N: 
$$k = 2^{2 \cdot 10 \cdot N}.$$ 
This way, $k$ represents an upper bound for the length of the logical provenance formula. 

\paragraph{DPLA$^*$. } 
DPLA$^*$ employs heuristics to estimate the probability of partial proofs to search for the best proof. 
Here, we use the geometric mean heuristic \cite{NeSy:DeepProbLog:Manhaeve2021}.

\begin{lstlisting}[caption={Rules used in Scallop for the MNIST-N-SUM task}, label={lst:scallop_rules}]
# N=1
self.scl_ctx.add_rule("summand_one(a) :- digit_1(a)")
self.scl_ctx.add_rule("summand_two(a) :- digit_2(a)")
# N=2
self.scl_ctx.add_rule("summand_one(10 * a + b) :- digit_1(a), digit_2(b)")
self.scl_ctx.add_rule("summand_two(10 * a + b) :- digit_3(a), digit_4(b)")
# N=3
self.scl_ctx.add_rule("summand_one(10 * 10 * a + 10 * b + c) :- digit_1(a), digit_2(b), digit_3(c)")
self.scl_ctx.add_rule("summand_two(10 * 10 * a + 10 * b + c) :- digit_4(a), digit_5(b), digit_6(c)")
# N=4
self.scl_ctx.add_rule("summand_one(10 * 10 * 10 * a + 10 * 10 * b + 10 * c + d) :- digit_1(a), digit_2(b), digit_3(c), digit_4(d)")
self.scl_ctx.add_rule("summand_two(10 * 10 * 10 * a + 10 * 10 * b + 10 * c + d) :- digit_5(a), digit_6(b), digit_7(c), digit_8(d)")
\end{lstlisting}

\end{document}